\documentclass{article}

 \usepackage[preprint]{neurips_2026}

\usepackage[utf8]{inputenc} 
\usepackage[T1]{fontenc}    
\usepackage[colorlinks=true,linkcolor=purple,citecolor=blue,urlcolor=blue]{hyperref} 
\usepackage{url}            
\usepackage{booktabs}       
\usepackage{threeparttable}
\usepackage{amsfonts}       
\usepackage{nicefrac}       
\usepackage{microtype}      
\usepackage{amsmath} 
\usepackage{cleveref}       
\usepackage{xcolor}         
\usepackage[table]{xcolor}
\usepackage[dvipsnames]{xcolor}
\usepackage{multirow}

\usepackage{graphicx}
\usepackage{amsmath,amssymb} 
\usepackage{amsthm}
\usepackage{tcolorbox}
\usepackage{enumitem}
\usepackage{caption}
\usepackage{subfigure}
\usepackage{array}
\usepackage{balance}
\usepackage{colortbl}
\usepackage{stfloats}
\usepackage{tabularx}
\usepackage{textcomp}
\usepackage{verbatim}
\usepackage{bm}
\usepackage{colortbl}
\usepackage{wrapfig}
\usepackage{pifont} 
\usepackage{fontawesome5} 
\newcommand{\cmark}{\ding{51}}%
\newcommand{\xmark}{\ding{55}}%

\usepackage{algorithm}
\usepackage{algorithmic}

\definecolor{b_proj}{HTML}{D71D3D}
\definecolor{b_mse}{HTML}{FFD966}
\definecolor{b_act}{HTML}{999999}

\renewcommand{\paragraph}[1]{\noindent\textbf{#1.}}
\renewcommand{\subparagraph}[1]{\noindent\textbf{\underline{#1.}}}

\newtheorem{proposition}{Proposition}

\renewenvironment{proof}{\noindent{\bfseries Proof:}}{\qed \smallskip} 

\title{Continuous-Time Distribution Matching for Few-Step Diffusion Distillation}
\author{
  \textbf{Tao Liu$^{1}$, Hao Yan$^{2}$, Mengting Chen$^{2,}$\thanks{Project leader.}{}, Taihang Hu$^{2}$, Zhengrong Yue$^{2}$, Zihao Pan$^{2}$} \\
  \textbf{Jinsong Lan$^{2}$, Xiaoyong Zhu$^{2}$, Ming-Ming Cheng$^{1}$, Bo Zheng$^{2,}$\thanks{Co-corresponding authors.}{}, Yaxing Wang$^{3,\dagger}$} \\
  {\normalfont $^{1}$VCIP, College of Computer Science, Nankai University \quad\quad $^{2}$Alibaba Group} \\
  {\normalfont $^{3}$College of Artificial Intelligence, Jilin University} \\
    \\[0.2em]
  {\normalfont 
  \begin{tabular}{@{}l@{}}
    \faGlobe\ \ \href{https://byliutao.github.io/cdm_page/}{https://byliutao.github.io/cdm\_page/} \\
    \faGithub\ \ \href{https://github.com/byliutao/CDM}{https://github.com/byliutao/cdm}
  \end{tabular}
  }
}

\begin{document}
\maketitle

\begin{center}
    \captionsetup{hypcap=false} 
    \includegraphics[width=\textwidth]{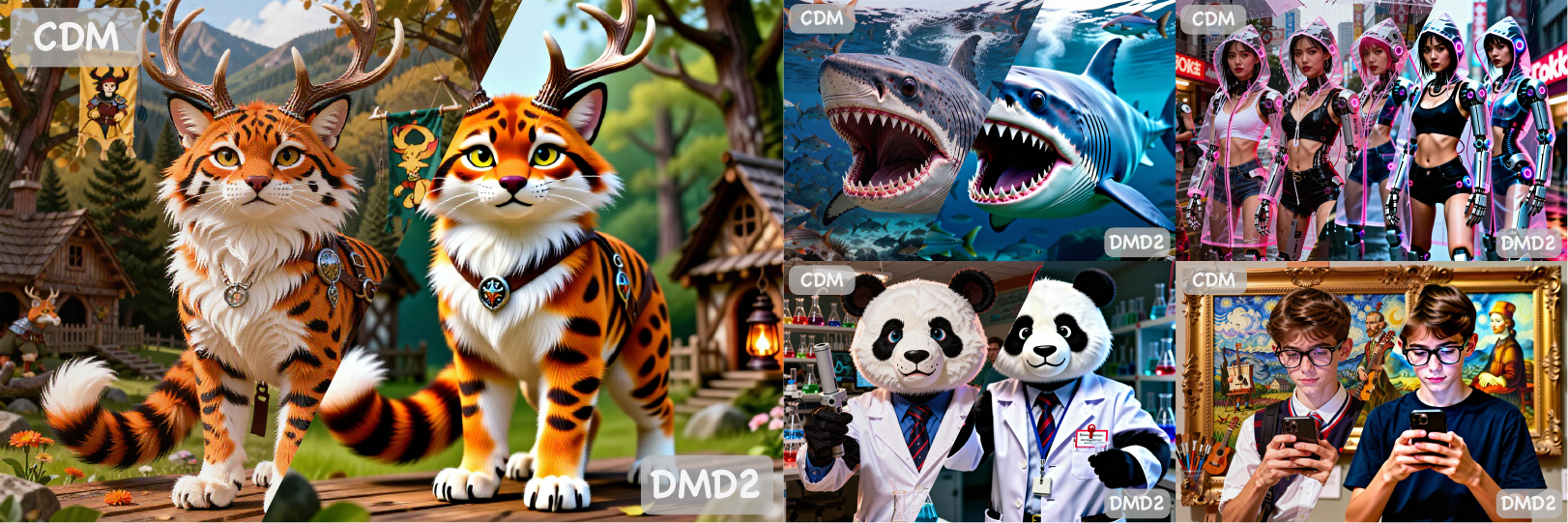} 
    \captionof{figure}{
        \textbf{CDM enables high-fidelity few-step text-to-image generation.}
        We compare our \emph{Continuous-Time Distribution Matching} (CDM) against DMD2, both distilled from Longcat-Image (1024$\times$1024) and evaluated at 4 NFE with identical prompts and seeds.
        Without relying on any GAN or reward-model auxiliary objectives, CDM produces sharper textures, richer fine-grained details, and overall higher visual fidelity, while DMD2 suffers from noticeable over-smoothing and detail loss.
        \emph{(Best viewed zoomed-in.)}
    }
    \label{fig:teaser}
\end{center}

\begin{abstract}
Step distillation has become a leading technique for accelerating diffusion models, among which Distribution Matching Distillation (DMD) and Consistency Distillation are two representative paradigms.
While consistency methods enforce self-consistency along the full PF-ODE trajectory to steer it toward the clean data manifold, vanilla DMD relies on sparse supervision at a few predefined discrete timesteps.
This restricted discrete-time formulation and mode-seeking nature of the reverse KL divergence tends to exhibit visual artifacts and over-smoothed outputs, often necessitating complex auxiliary modules---such as GANs or reward models---to restore visual fidelity.
In this work, \emph{we introduce \textbf{C}ontinuous-Time \textbf{D}istribution \textbf{M}atching (\textbf{CDM}), migrating the DMD framework from discrete anchoring to continuous optimization for the first time.}
CDM achieves this through two continuous-time designs.
First, we replace the fixed discrete schedule with a dynamic continuous schedule of random length, so that distribution matching is enforced at arbitrary points along sampling trajectories rather than only at a few fixed anchors.
Second, we propose a continuous-time alignment objective that performs active off-trajectory matching on latents extrapolated via the student's velocity field, improving generalization and preserving fine visual details.
Extensive experiments on different architectures, including SD3-Medium and Longcat-Image, demonstrate that CDM provides highly competitive visual fidelity for few-step image generation without relying on complex auxiliary objectives.
\end{abstract}
\section{Introduction}
\label{sec:intro}

The remarkable capabilities of diffusion and flow-matching models~\cite{esser2024scaling, ho2020denoising, lipmanflow, liuflow, rombach2022high, songdenoising} have revolutionized text-to-image generation in recent years, setting new benchmarks for high-fidelity visual synthesis.
Despite their exceptional generation quality, these models fundamentally rely on an iterative sampling process.
This sequential procedure, typically demanding tens to hundreds of network evaluations, imposes a severe computational bottleneck that ultimately limits their real-world deployment.
Accelerating this generation process without sacrificing sample quality has therefore become a central research challenge.

To bridge this gap, a variety of diffusion distillation paradigms have emerged~\cite{liu2023instaflow, luo2023latent, meng2023distillation, salimans2022progressive, sauer2024adversarial, song2023consistency}.
While early efforts reduced sampling to a few steps, the resulting models often struggle to balance inference speed with faithful text-image alignment.
Among the diverse technical routes aimed at few-step synthesis, score-based distribution matching---prominently represented by Diff-Instruct~\cite{luo2023diff} and Distribution Matching Distillation~\cite{yin2024one}---has emerged as a leading framework.
By mathematically matching the student's output distribution with the pre-trained teacher's target distribution, these methods have demonstrated state-of-the-art performance in accelerating generative models.

Despite its success, existing DMD methods~\cite{chadebec2025flash, liu2025decoupled, yin2024improved} inherit a structural limitation from their backward simulation strategy.
To keep the simulated training trajectory consistent with the few-step inference procedure, they restrict the simulated timesteps to a fixed set of discrete anchors that matches the inference schedule.
Unlike Consistency Distillation~\cite{lusimplifying,luo2023latent,song2023consistency}, which naturally optimizes trajectories within a continuous space, this strict confinement to sparse discrete schedules severely limits DMD.
The lack of intermediate, dense supervision forces the student to learn an unsmooth velocity field.
Furthermore, the underlying reverse KL objective is inherently mode-seeking~\cite{lu2025adversarial, xie2024distillation}, biasing the student toward a few dominant modes of the teacher's distribution.
Consequently, the generated images often suffer from oversmoothing and visual artifacts, typically necessitating complex auxiliary modules (such as GANs or reward models) to restore visual fidelity~\cite{chadebec2025flash, yin2024one}.

However, our preliminary empirical analysis challenges this strict training-inference alignment requirement~\cite{karras2022elucidating} (\Cref{fig:schedule}).
We investigate an alternative formulation where the model is optimized via backward simulation using uniformly sampled continuous timesteps $t \in (0, 1]$ with random length at each training iteration, decoupling it from the fixed inference schedule.
By simply randomizing the training timestep at each iteration, the student is trained over the full continuous time space rather than a few fixed points, and receives teacher gradients from a much wider range of trajectories.
Empirically, this simple change not only preserves distillation performance, but yields consistent improvements: the dynamically scheduled model attains higher HPSv3 scores with finer details and fewer artifacts than its strictly aligned counterpart.
This suggests that distribution matching is \textit{schedule-independent}---rather than serving as a necessary anchor, the discrete schedule acts as an overly restrictive constraint on the student's achievable quality.

\begin{figure}[htp]
    \centering
    \includegraphics[width=\textwidth]{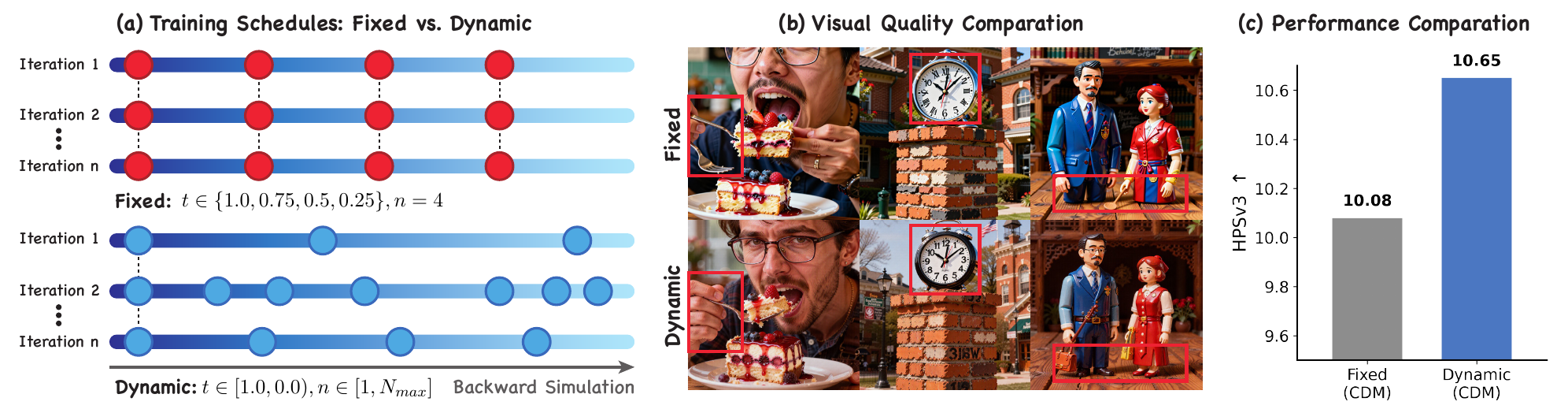}
    \vspace{-0.2cm}
    \caption{
        \textbf{Empirical evidence of schedule decoupling.} 
        \textbf{(a)} Conventional distillation strictly anchors backward simulation to predefined discrete inference timesteps. In contrast, our dynamic scheduling optimizes over uniformly sampled continuous timesteps $t \in (0, 1]$ at each iteration. 
        \textbf{(b)} Visually, the dynamically scheduled model produces finer details and fewer artifacts than the strictly aligned baseline. 
        \textbf{(c)} Quantitatively, it also attains a higher HPSv3 score, indicating that exact discrete alignment is not only unnecessary but in fact restrictive---motivating our continuous-time formulation.
    }
    \label{fig:schedule}
\end{figure}

Given that distribution matching benefits from unrestricted continuous timesteps, it is crucial to understand what exactly the model learns from these matching signals.
Recent studies~\cite{liu2025decoupled,yu2023text} decouple DMD training into a CFG Augmentation (CA) loss and a Distribution Matching (DM) loss, treating the latter simply as a "regularizer" for training stability and mitigating artifacts.
However, visual evidence in \Cref{fig:cfg_vs_nocfg} (further supported by the quantitative validation in Appendix \Cref{tab:dm_loss_analysis}) reveals a fundamentally different paradigm.
When student models are distilled solely with the DM loss, their generated images closely match the samples produced by the teacher \textit{without classifier-free guidance (CFG)}---which we refer to as the teacher's \textbf{CFG-free distribution}.
This tight correlation indicates that the achievable performance of the DM loss is closely aligned with the teacher's CFG-free distribution.
Rather than acting as a passive regularizer, the DM loss plays a substantive role in faithfully capturing this CFG-free distribution throughout the distillation process.

\begin{figure}[t]
    \centering
    \includegraphics[width=\textwidth]{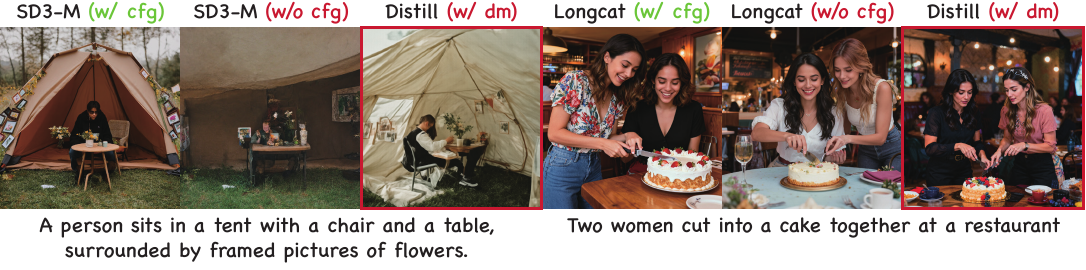}
    \caption{\textbf{Visual evidence on the role of the DM loss.} Samples from teacher models (SD3-Medium and Longcat-Image) with and without CFG, compared against student models distilled with the DM loss alone. Students distilled with the DM loss alone closely match their teachers' \textit{CFG-free} samples, indicating that the DM loss is not a mere stabilizer but the key driver that aligns the student to the teacher's CFG-free distribution.}
    \label{fig:cfg_vs_nocfg}
\end{figure}

While continuous scheduling provides flexible on-trajectory supervision, few-step generation inevitably introduces severe numerical truncation errors due to large integration step sizes, causing the inference trajectory to drift off the ideal manifold~\cite{ningelucidating, ning2023input}.
To directly counter this, we propose a novel \textit{Continuous-Time Distribution Matching (CDM) loss}, which intrinsically incorporates a \textit{velocity-driven extrapolation} mechanism into its matching objective.
Instead of restricting supervision to on-trajectory latents, the CDM loss actively probes off-trajectory latents by taking a first-order step along the student's predicted velocity field, and enforces distribution matching upon them.
Acting as a powerful spatial alignment objective, it effectively mitigates off-trajectory drift, empowering the student to self-correct integration errors and recover sharp, high-frequency details.


In summary, to the best of our knowledge, we are the first to migrate the DMD distillation framework from discrete schedules to a continuous optimization space. Our contributions are as follows:
\begin{itemize}
    \item We empirically reveal two key insights in distribution matching: (1) anchoring the training optimization to a fixed set of discrete timesteps is not necessary; and (2) the distribution matching (DM) loss acts not merely as a "regularizer", but drives the student to align with the teacher's CFG-free distribution.
    \item To fully exploit these findings, we propose the CDM framework. This paradigm unifies a \textit{dynamic continuous scheduling} strategy for flexible on-trajectory supervision, and a novel \textit{off-trajectory CDM loss} equipped with velocity-driven extrapolation to actively mitigate numerical integration errors during sampling.
    \item Extensive experimental results demonstrate that our continuous paradigm yields significant performance gains, establishing new state-of-the-art results for few-step image generation across different models (\emph{e.g.,} SD3-Medium and Longcat-Image) without relying on complex auxiliary modules.
\end{itemize}
\section{Related Work}
\label{sec:related}

\paragraph{Diffusion Distillation}
While diffusion models~\cite{ho2020denoising,rombach2022high, songscore} have achieved unprecedented success in visual generation tasks, their iterative sampling process poses a significant computational bottleneck.
To accelerate inference, numerous distillation paradigms have been proposed.
Progressive distillation~\cite{meng2023distillation,sabouralign,salimans2022progressive} accelerates sampling by iteratively training a student to compress two teacher steps into one, progressively halving the required function evaluations.
Consistency models~\cite{kimconsistency, lusimplifying, luo2023latent, peng2025facm, song2023consistency, wang2024phased, zheng2024trajectory} take a different approach by enforcing a self-consistency property: learning a direct mapping from any point along the probability flow ODE trajectory to the trajectory's origin on the data manifold, enabling few-step generation.
Alternatively, adversarial distillation methods~\cite{lin2024sdxl, sauer2024adversarial} leverage a discriminator to align the few-step student's output directly with the real data distribution.
Recent hybrid approaches further combine these paradigms: SANA-Sprint~\cite{chen2025sana-sprint} and SwiftVideo~\cite{sun2026swiftvideo} unify continuous-time consistency distillation with adversarial distribution alignment or trajectory distribution alignment, while TwinFlow~\cite{cheng2025twinflow} pairs consistency modeling with self-adversarial distribution matching to enable high-fidelity one-step generation.

\paragraph{Score-based Distillation}
Score-based distillation originated in text-to-3D generation, where SDS~\cite{pooledreamfusion} and VSD~\cite{wang2023prolificdreamer} leveraged pretrained diffusion scores to optimize 3D representations, establishing the conceptual foundation of distribution matching for distillation.
Extending this paradigm to 2D image generation, Diff-Instruct~\cite{luo2023diff} and DMD~\cite{yin2024one} formulated KL-based distribution matching frameworks for distilling diffusion models into few-step generators, with DMD2~\cite{yin2024improved} further improving stability via adversarial losses.
Subsequent theoretical analyses~\cite{liu2025decoupled, yu2023text} decoupled the score distillation objective, revealing that CFG augmentation drives few-step conversion while the distribution matching term serves as a stabilizing regularizer.
More recently, the DMD framework has been extended along multiple axes: scaling to large flow-based models~\cite{ge2025senseflow}, incorporating RL-based or GAN-based refinement~\cite{chadebec2025flash,jiang2025distribution,ren2024hyper}, combining with consistency distillation or progressive distillation~\cite{fan2025phased,ren2024hyper,wei2026skywork}, introducing scale-wise distillation~\cite{chen2026cross,starodubcev2025scale}, score
identity distillation~\cite{zhou2025few}, or cache-aware distillation~\cite{li20261,nie2026transition}.
Despite these advances, all existing DMD-based methods evaluate the DM loss exclusively at sparse discrete timesteps, leaving the continuous trajectory unoptimized.
To address these limitations, we propose Continuous-Time Distribution Matching (CDM), which introduces a dynamic continuous schedule together with a velocity-driven off-trajectory alignment objective, shifting the optimization to the continuous-time domain.
Notably, a concurrent work~\cite{qin2026soarselfcorrectionoptimalalignment} shares a similar off-trajectory insight with us, but constructs off-trajectory points via re-noising and focuses on post-training alignment rather than distillation.


\section{Method}
\label{sec:method}

We present Continuous-Time Distribution Matching (CDM), a unified distillation framework that lifts the discrete-time DMD paradigm into a fully continuous-time formulation for high-fidelity few-step generation.
We first formalize the decoupled Distribution Matching Distillation (DMD) baseline (\Cref{sec:prelim}).
Building on this, we relax the fixed inference schedule into a dynamic continuous schedule and theoretically examine its implications for distribution matching (\Cref{sec:pilot_study}).
Finally, in \Cref{sec:cdm} we complement these with the CDM loss, which extends supervision from on-trajectory anchors to off-trajectory latents via a velocity-driven extrapolation, regularizing the student's velocity field across the continuous time domain.
The unified training pipeline is illustrated in \Cref{fig:pipeline}.

\subsection{Preliminaries: Decoupled Distribution Matching}
\label{sec:prelim}

The goal of our distillation framework is to train a student flow model $\mathcal{D}_\theta$ capable of generating high-quality samples in $N$ discrete steps, by distilling knowledge from a pre-trained teacher model $\mathcal{D}_\phi$ that typically requires $T \gg N$ steps.
Here, $\mathcal{D}(\mathbf{x}_t, t, \mathbf{c})$ denotes the model prediction that estimates the clean data from the noisy latent $\mathbf{x}_t$ at timestep $t \in (0, 1]$, conditioned on $\mathbf{c}$.
Formally, assuming the underlying neural network $v_\theta$ is trained to predict the velocity field, the clean data estimate $\mathcal{D}_\theta$ is explicitly parameterized as:
\begin{equation}
    \label{eq:data_pred}
    \mathcal{D}_\theta(\mathbf{x}_t, t, \mathbf{c}) = \mathbf{x}_t - t v_\theta(\mathbf{x}_t, t, \mathbf{c}).
\end{equation}

Building upon DMD~\cite{yin2024one}, DMD2~\cite{yin2024improved}, and Decoupled DMD (D-DMD)~\cite{liu2025decoupled}, we employ a backward simulation strategy to construct the sampling trajectory.
Specifically, starting from random noise $\mathbf{x}_{t_1} \sim \mathcal{N}(\mathbf{0}, \mathbf{I})$, we generate the trajectory by numerically integrating the probability flow ODE along the student's predefined discrete time schedule $\{t_1, \ldots, t_N\}$.
During this process, we extract an intermediate latent state $\mathbf{x}_{t_i}$, where the index $i \sim \mathcal{U}\{1, \ldots, N\}$ is uniformly sampled. 
The distillation objective is decoupled into two orthogonal components: a CFG Augmentation (CA) term and a Distribution Matching (DM) term:
\begin{equation}
    \label{eq:dmd_loss}
    \mathcal{L}_{\mathrm{DMD}} = \mathcal{L}_{\mathrm{CA}} + \mathcal{L}_{\mathrm{DM}}.
\end{equation}

\paragraph{CA Loss ($\mathcal{L}_{\mathrm{CA}}$)}
To enforce text-image alignment, the latent $\mathbf{x}_{t_i}$ is passed through the student model to yield the clean data estimate $\mathcal{D}_\theta(\mathbf{x}_{t_i}, t_i, \mathbf{c})$.
This estimate is subsequently perturbed with noise to a random continuous timestep $\tau \in (0, 1]$ to form $\mathbf{z}_{\tau}$.
Following DMD~\cite{yin2024one}, we introduce a dynamic weighting factor $w_\tau = \| \mathcal{D}_\phi(\mathbf{z}_{\tau}, \tau, \mathbf{c}) - \mathcal{D}_\theta(\mathbf{x}_{t_i}, t_i, \mathbf{c}) \|_1^{-1}$ to normalize the gradient's magnitude.
The CA loss is then defined as:
\begin{equation}
    \label{eq:cfg_loss}
    \mathcal{L}_{\mathrm{CA}} = \frac{1}{2}\left\| \mathcal{D}_\theta(\mathbf{x}_{t_i}, t_i, \mathbf{c}) - \operatorname{sg}\left[ \mathcal{D}_\theta(\mathbf{x}_{t_i}, t_i, \mathbf{c}) + \underbrace{ w_\tau \alpha \left( \mathcal{D}_\phi(\mathbf{z}_{\tau}, \tau, \mathbf{c}) - \mathcal{D}_\phi(\mathbf{z}_{\tau}, \tau, \varnothing) \right) }_{\Delta_{\mathrm{ca}}^{\mathrm{real}} \text{ (CFG Augmentation)}} \right] \right\|_2^2,
\end{equation}
where $\mathbf{c}$ is the conditioning text, $\alpha$ is the guidance scale, and $\operatorname{sg}[\cdot]$ is the stop-gradient operator.

\paragraph{DM Loss ($\mathcal{L}_{\mathrm{DM}}$)}
To align the student's marginal distribution with the real data manifold, we similarly reuse the student's clean data estimate $\mathcal{D}_\theta(\mathbf{x}_{t_i}, t_i, \mathbf{c})$.
This estimate is independently perturbed with noise to another random continuous timestep $\tilde{\tau} \in (0, 1]$ to form $\mathbf{z}_{\tilde{\tau}}$.
Using a frozen real teacher $\mathcal{D}_\phi$ and an online-updated fake teacher $\mathcal{D}_\psi$ (which parameterizes the student's score), the DM loss is defined as:
\begin{equation}
    \label{eq:ddm_loss}
    \mathcal{L}_{\mathrm{DM}} = \frac{1}{2}\left\| \mathcal{D}_\theta(\mathbf{x}_{t_i}, t_i, \mathbf{c}) - \operatorname{sg}\left[ \mathcal{D}_\theta(\mathbf{x}_{t_i}, t_i, \mathbf{c}) + \underbrace{ w_{\tilde{\tau}}(\mathcal{D}_\phi(\mathbf{z}_{\tilde{\tau}}, \tilde{\tau}, \mathbf{c}) - \mathcal{D}_\psi(\mathbf{z}_{\tilde{\tau}}, \tilde{\tau}, \mathbf{c}) )}_{\Delta_{\mathrm{dm}}^{\mathrm{real-fake}} \text{ (Distribution Matching)}} \right] \right\|_2^2,
\end{equation}
where $\mathcal{D}_\phi$ and $\mathcal{D}_\psi$ denote the frozen real teacher and the online-updated fake teacher (which parameterizes the student's generated distribution), respectively.

\begin{figure}[t]
    \centering
    \includegraphics[width=\textwidth]{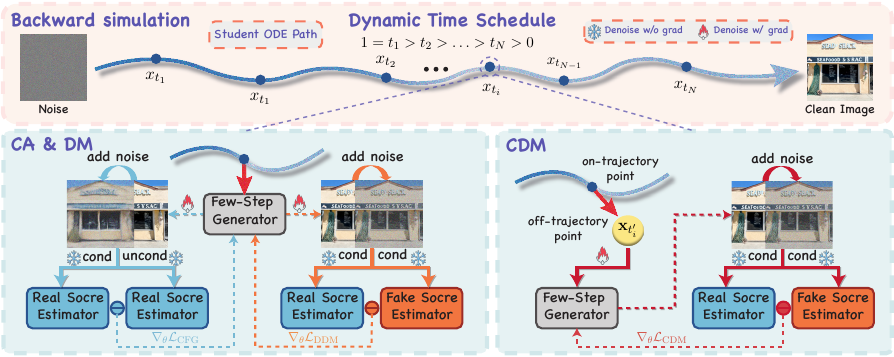}
    \caption{
        \textbf{Overview of Continuous-Time Distribution Matching (CDM).}
        \textbf{Top:} Our approach employs a dynamic continuous time schedule during backward simulation, sampling intermediate anchors uniformly from $(0, 1]$.
        \textbf{Bottom Left:} CFG augmentation (CA) and distribution matching (DM) operate on this dynamic schedule to align text-image conditions and data distributions at on-trajectory anchors.
        \textbf{Bottom Right:} To address inter-anchor inconsistency, the proposed CDM objective explicitly extrapolates off-trajectory latents ($\mathbf{x}_{t_i'}$) using the predicted velocity. 
    }
    \label{fig:pipeline}
\end{figure}

\subsection{Dynamic Time Schedule}
\label{sec:pilot_study}

In vanilla DMD2~\cite{yin2024improved} paradigm, the backward simulation strategy relies on a fixed, predefined set of discrete timesteps matching the target inference schedule, denoted as $\mathcal{S}_{\mathrm{infer}} = \{t_1, \ldots, t_N\}$.
To maintain strict training-inference consistency, prior methods force the backward simulation during training to exclusively operate on these exact points.

However, we propose to break this rigid constraint by introducing a continuous dynamic time schedule.
In each training iteration, the backward simulation length $N$ is no longer fixed but randomly sampled ($N \sim \mathcal{U}\{1, N_{\max}\}$).
We then randomly generate a strictly decreasing continuous time sequence $1 = t_1 > t_2 > \ldots > t_N > 0$, where $1$ represents pure noise and $0$ represents the clean image.
This dynamic schedule brings two independent benefits.
First, the random simulation length $N$ exposes the student to varying numbers of inference steps at training time and lets the teacher provide gradient signals over a more diverse distribution of intermediate latents $\mathbf{x}_{t_i}$.
Second, the student's anchors $t_i$ are no longer confined to the fixed discrete set $\mathcal{S}_{\mathrm{infer}}$; instead, they are drawn from the same continuous domain $(0, 1]$ as the teacher's perturbation timesteps $\tau$ and $\tilde{\tau}$, which remain independently sampled.
This eliminates the mismatch between the discrete student anchors and the continuous teacher supervision in vanilla DMD.

To provide a theoretical motivation for our dynamic time schedule, we examine the optimization from a score-matching perspective by applying Tweedie's formula~\cite{efron2011tweedie} (see \Cref{app:score_matching} for detailed derivations).
Let $p_{\mathrm{real}}(\mathbf{z}_t | \mathbf{c})$ denote the marginal distribution of the real data at a continuous noise level $t$, and $p_{\mathrm{fake}}(\mathbf{z}_t | \mathbf{c})$ represent the fake target distribution.

For the CFG Augmentation (CA) loss, the gradient mathematically defines the direction of an implicit classifier $\log p_{\mathrm{real}}(\mathbf{c}|\mathbf{z}_\tau)$~\cite{yu2023text}, effectively pushing the student's generation toward regions of higher text-image alignment:
\begin{equation}
    \label{eq:dyn_ca_grad}
    \nabla_\theta \mathcal{L}_{\mathrm{CA}} = -w_\tau \alpha \frac{\tau^2}{1-\tau} \left( \frac{\partial \mathcal{D}_\theta(\mathbf{x}_{t_i}, t_i, \mathbf{c})}{\partial \theta} \right)^T \nabla_{\mathbf{z}_\tau} \log p_{\mathrm{real}}(\mathbf{c}|\mathbf{z}_\tau).
\end{equation}

For the Distribution Matching (DM) loss, the formulation reveals its analytical connection to the Kullback-Leibler divergence.
Specifically, optimizing the DM loss corresponds to minimizing the KL divergence $D_{\mathrm{KL}}(p_{\mathrm{gen}}^{\tilde{\tau}} \| p_{\mathrm{real}}^{\tilde{\tau}})$ between the student's generative distribution and the real data distribution at time $\tilde{\tau}$:
\begin{equation}
    \label{eq:dyn_dm_grad}
    \nabla_\theta \mathcal{L}_{\mathrm{DM}} = -w_{\tilde{\tau}} \frac{\tilde{\tau}^2}{1-\tilde{\tau}} \left( \frac{\partial \mathcal{D}_\theta(\mathbf{x}_{t_i}, t_i, \mathbf{c})}{\partial \theta} \right)^T \left( \nabla_{\mathbf{z}_{\tilde{\tau}}} \log p_{\mathrm{real}}(\mathbf{z}_{\tilde{\tau}}|\mathbf{c}) - \nabla_{\mathbf{z}_{\tilde{\tau}}} \log p_{\mathrm{fake}}(\mathbf{z}_{\tilde{\tau}}|\mathbf{c}) \right).
\end{equation}

Crucially, the student's input timestep $t_i$ and the teacher's perturbation timesteps $\tau, \tilde{\tau}$ are independently sampled from the same continuous distribution over $(0, 1]$.
In expectation, this mechanism encourages both the CA and DM gradients in \Cref{eq:dyn_ca_grad,eq:dyn_dm_grad} to regularize the student's velocity field across the continuous time domain, rather than overfitting to sparse discrete anchors. 
While this continuous formulation provides a theoretical intuition for a smoother velocity field, we empirically validate its generalization benefits in \Cref{fig:schedule} and our experiments (\Cref{sec:experiments}).

\subsection{Continuous-Time Distribution Matching (CDM)}
\label{sec:cdm}

The dynamic continuous schedule introduced in \Cref{sec:pilot_study} provides supervision at randomly sampled anchors visited by backward simulation and can in principle cover any point at $(0, 1]$ given enough iterations. The supervision is applied to one anchor at a time: at each $t_i$, the loss only constrains the student's prediction $\mathcal{D}_\theta(\mathbf{x}_{t_i}, t_i, \mathbf{c})$ to match the target distribution at that single point.
It does not constrain the student's velocity $v_\theta$ to remain consistent across adjacent time steps.
Few-step inference, however, depends on this property: each Euler step from $t_j$ to $t_{j-1}$ introduces an error of order $\mathcal{O}((\Delta t)^2 \sup_\tau \|dv_\theta/d\tau\|)$, where the last term measures how rapidly $v_\theta$ changes between adjacent time steps (see \Cref{app:euler_error} for a detailed derivation).
Supervising each anchor in isolation gives no direct control over this term.

To reduce this inter-anchor inconsistency, we introduce the CDM loss, which adds supervision on intermediate latents between adjacent anchors.
Given an on-trajectory latent $\mathbf{x}_{t_i}$ and its predicted velocity $v_{t_i} = v_\theta(\mathbf{x}_{t_i}, t_i, \mathbf{c})$, we sample a paired anchor $t_i' \sim \mathcal{U}(0, 1]$ independent of the integration schedule and perform a first-order Euler extrapolation:
\begin{equation}
    \label{eq:cdm_extrapolation}
    \mathbf{x}_{t_i'} = \mathbf{x}_{t_i} + (t_i' - t_i)\,v_{t_i}.
\end{equation}
Because the underlying probability flow ODE trajectory is curved, a large stride $|t_i' - t_i|$ along the linearized velocity $v_{t_i}$ produces an intermediate latent $\mathbf{x}_{t_i'}$ that lies between (or beyond) the discrete anchors and is not visited by standard backward simulation.

To supervise $\mathbf{x}_{t_i'}$, we construct the target latent directly from the local clean data estimate predicted at the extrapolated point.
Specifically, we pass $\mathbf{x}_{t_i'}$ through the student model to obtain the local prediction $\hat{\mathbf{x}}_0^{(i')} = \mathcal{D}_\theta(\mathbf{x}_{t_i'}, t_i', \mathbf{c})$, and re-noise it to a continuous time $\hat{\tau} \sim \mathcal{U}(0, 1]$:
\begin{equation}
    \label{eq:cdm_target_state}
    \mathbf{z}_{\hat{\tau}} = (1-\hat{\tau})\operatorname{sg}[\hat{\mathbf{x}}_0^{(i')}] + \hat{\tau}\boldsymbol{\epsilon}_{\hat{\tau}}, \quad \boldsymbol{\epsilon}_{\hat{\tau}} \sim \mathcal{N}(\mathbf{0}, \mathbf{I}).
\end{equation}
By anchoring the reference target to the local estimate $\hat{\mathbf{x}}_0^{(i')}$, we establish a self-consistency constraint for the student's vector field. Due to the Euler extrapolation, $\mathbf{x}_{t_i'}$ naturally drifts off the ideal sampling trajectory. Re-noising this drifted prediction yielding $\mathbf{z}_{\hat{\tau}}$ allows the frozen teacher to evaluate the local score matching error. This localized supervision essentially penalizes invalid velocity predictions outside the main trajectory, promoting a smoother and more regularized flow for few-step integration.

The CDM loss is then defined on the extrapolated input and the $\hat{\mathbf{x}}_0^{(i')}$-anchored target:
\begin{equation}
    \label{eq:cdm_base_loss}
    \mathcal{L}_{\mathrm{CDM}} = \frac{1}{2}\left\| \mathcal{D}_\theta(\mathbf{x}_{t_i'}, t_i', \mathbf{c}) - \operatorname{sg}\Bigl[ \mathcal{D}_\theta(\mathbf{x}_{t_i'}, t_i', \mathbf{c}) + w_{\hat{\tau}} \underbrace{ \left( \mathcal{D}_\phi(\mathbf{z}_{\hat{\tau}}, \hat{\tau}, \mathbf{c}) - \mathcal{D}_\psi(\mathbf{z}_{\hat{\tau}}, \hat{\tau}, \mathbf{c}) \right) }_{\Delta_{\mathrm{cdm}}^{\mathrm{real-fake}}} \Bigr] \right\|_2^2.
\end{equation}
By matching the student's prediction at the off-trajectory latent $\mathbf{x}_{t_i'}$ to the target distribution, $\mathcal{L}_{\mathrm{CDM}}$ constrains $v_\theta$ across the continuous interval, reducing the inter-anchor inconsistency.

\paragraph{Full Training Objective}
Our comprehensive training objective unifies these mathematical components into a single sum:
\begin{equation}
    \label{eq:full_objective}
    \mathcal{L} = \mathcal{L}_{\mathrm{CA}} + \mathcal{L}_{\mathrm{DM}} + \mathcal{L}_{\mathrm{CDM}}.
\end{equation}

\section{Experiments}
\label{sec:experiments}

\subsection{Experimental Setup}
\label{sec:exp_setup}

\paragraph{Experiment Setting}
We conduct our main experiments on SD3-Medium~\cite{esser2024scaling} at a resolution of $1024 \times 1024$.
For evaluation, we employ Aesthetic Score (AES)~\cite{schuhmann2022laionaesthetics}, PickScore~\cite{kirstain2023pick}, HPS v3~\cite{ma2025hpsv3}, and CLIP Score (ViT-H-14)~\cite{hessel2021clipscore} on 2K prompts sampled from the test split of the PickScore dataset~\cite{kirstain2023pick}.
We additionally report fine-grained prompt adherence on DPG-Bench (DPG)~\cite{hu2024ella} using 1K prompts.
For a comprehensive evaluation, we compare our method against several leading few-step generation baselines, including Hyper-SD~\cite{ren2024hyper}, Flash~\cite{chadebec2025flash}, TDM~\cite{luo2025learning}, DMD2~\cite{yin2024improved}, and D-DMD~\cite{liu2025decoupled}.
Furthermore, to demonstrate the broad applicability of our approach, we also extend our experiments to Longcat-Image~\cite{team2025longcat}.
Detailed experiment configurations are provided in \Cref{app:appendix_exp_details}.

\subsection{Main Results}
\label{sec:main_results}

\begin{figure}[ht]
    \centering
    \includegraphics[width=\textwidth]{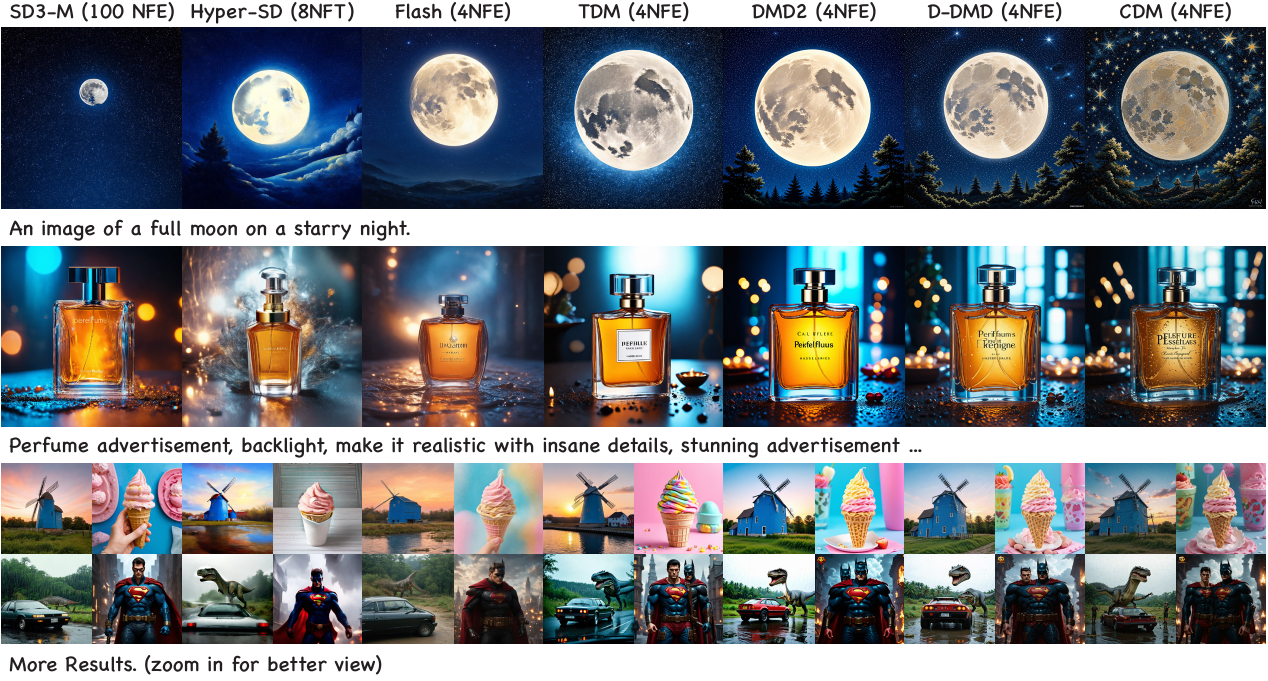}
    \caption{\textbf{Qualitative comparison on SD3-Medium.} CDM produces more photorealistic results with richer details than competing methods. All results are generated using the same initial noise and random seed for fair comparison.}
    \label{fig:main_results}
\end{figure}

\begin{table*}[t]
\centering
\caption{\textbf{Quantitative comparison of different methods on SD3-Medium and Longcat-Image.}
For each backbone, the best and {second-best} results are highlighted in \textbf{bold} and \underline{underline}, respectively; the base model serves as a reference and is excluded from the ranking.
Methods marked with $*$ denote our reproduced results.
Image-Free indicates methods that do not rely on real images during distillation; Continuous indicates methods whose supervision is applied at arbitrary continuous timesteps.}

\setlength{\tabcolsep}{3.5pt} 

\resizebox{\textwidth}{!}{
\begin{tabular}{lc ccccc cc}
\toprule
\textbf{Method} & \textbf{NFE} & \textbf{Aesthetic}$\uparrow$ & \textbf{DPGBench}$\uparrow$ & \textbf{PickScore}$\uparrow$ & \textbf{HPSv3}$\uparrow$ & \textbf{CLIPScore}$\uparrow$ & \textbf{Image-Free} & \textbf{Continuous} \\
\midrule
\multicolumn{9}{l}{\textit{\color{gray}SD3-Medium}} \\
\rowcolor[HTML]{E6E6E6} 
Base~\cite{esser2024scaling} & 100 & 5.885 & 85.04 & 21.73 & 8.189 & 28.60 & - & - \\
Hyper-SD~\cite{ren2024hyper} & 8 & 5.180 & 80.43 & 20.82 & 6.054 & 27.93 & {\color{red}\xmark} & {\color{green}\cmark} \\
Flash~\cite{chadebec2025flash} & 4 & 5.968 & 80.47 & 21.69 & 8.282 & \textbf{28.18} & {\color{red}\xmark} & {\color{red}\xmark} \\
TDM~\cite{luo2025learning} & 4 & 6.013 & 83.12 & 21.61 & 8.468 & 27.63 & {\color{green}\cmark} & {\color{red}\xmark} \\
DMD2*~\cite{yin2024improved} & 4 & \underline{6.038} & 83.96 & 21.58 & 8.419 & 27.56 & {\color{green}\cmark} & {\color{red}\xmark} \\
D-DMD*~\cite{liu2025decoupled} & 4 & \underline{6.038} & \underline{84.52} & \underline{21.85} & \underline{9.176} & 27.69 & {\color{green}\cmark} & {\color{red}\xmark} \\
\rowcolor[HTML]{EBE4F2}
CDM (Ours) & 4 & \textbf{6.075} & \textbf{85.26} & \textbf{21.95} & \textbf{9.561} & \underline{27.98} & {\color{green}\cmark} & {\color{green}\cmark} \\
\midrule
\multicolumn{9}{l}{\textit{\color{gray}Longcat-Image}} \\
\rowcolor[HTML]{E6E6E6} 
Base~\cite{team2025longcat} & 100 & 5.926 & 87.08 & 21.65 & 9.450 & 26.78 & - & - \\
DMD2*~\cite{yin2024improved} & 4 & \underline{5.800} & 87.12 & 21.07 & 8.803 & \textbf{26.99} & {\color{green}\cmark} & {\color{red}\xmark} \\
D-DMD*~\cite{liu2025decoupled} & 4 & 5.782 & \underline{88.04} & \underline{21.23} & \underline{9.629} & 26.57 & {\color{green}\cmark} & {\color{red}\xmark} \\
\rowcolor[HTML]{EBE4F2} 
CDM (Ours) & 4 & \textbf{5.919} & \textbf{88.35} & \textbf{21.53} & \textbf{10.65} & \underline{26.72} & {\color{green}\cmark} & {\color{green}\cmark} \\
\bottomrule
\end{tabular}
} 
\label{tab:metrics_comparison}
\end{table*}

\paragraph{Quantitative Results}
As shown in \Cref{tab:metrics_comparison}, CDM achieves competitive performance against existing few-step baselines on both SD3-Medium and Longcat-Image with only 4 NFE.
On SD3-Medium, CDM obtains the best Aesthetic ($6.075$), DPGBench ($85.26$), PickScore ($21.95$), and HPSv3 ($9.561$), while maintaining a highly competitive CLIPScore.
Among image-free methods, CDM compares favorably with D-DMD~\cite{liu2025decoupled}, achieving consistent improvements across all metrics, notably improving HPSv3 from $9.176$ to $9.561$.
A similar trend holds on Longcat-Image, where CDM attains the best results on Aesthetic, DPGBench, PickScore, and HPSv3.
Interestingly, our 4-NFE student matches or even surpasses the 100-NFE pretrained teacher on a range of metrics (\emph{e.g.}, DPGBench and HPSv3) on both backbones, suggesting that the proposed continuous-time optimization framework provides supervision signals that go beyond merely replicating the teacher's outputs.
More quantitative comparisons, including training and inference efficiency, are provided in \Cref{app:appendix_efficiency}.

\paragraph{Qualitative Comparison}
\Cref{fig:main_results} presents a side-by-side qualitative comparison against representative few-step baselines as well as the 100-NFE teacher.
Across diverse prompts, CDM consistently yields sharper textures and fine-grained details (e.g., background elements and material reflections), and stronger semantic adherence to multi-entity compositional prompts, while competing baselines often exhibit blurry high-frequency content or missing attributes.
Notably, despite operating with only 4 NFE, CDM matches or even visually surpasses the 100-NFE teacher in perceptual sharpness and aesthetics on many cases, corroborating the quantitative trends in \Cref{tab:metrics_comparison}.
Additional visual results on both SD3-Medium and Longcat-Image are provided in \Cref{sec:appendix_more_qualitative}.
\subsection{Ablation Study}
\label{sec:ablation}

\begin{figure}[t]
    \centering
    \includegraphics[width=\textwidth]{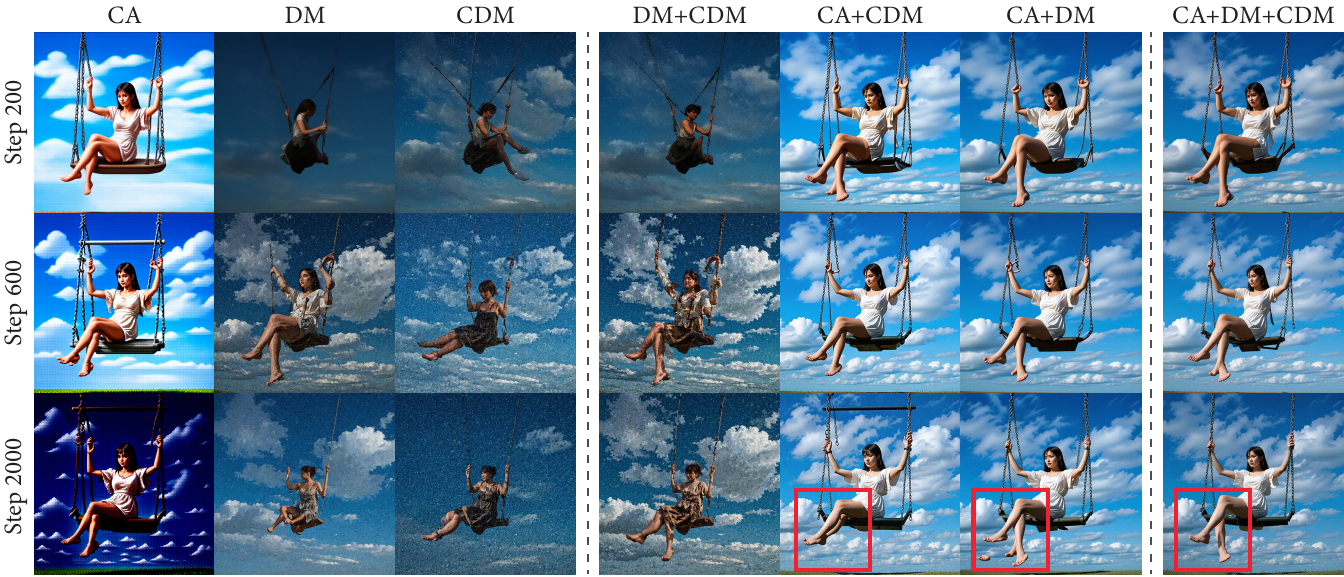}
    \caption{
        \textbf{Qualitative ablation of loss components across training steps.}
        \textbf{Left:} Individual losses (CA, DM, CDM) in isolation.
        \textbf{Right:} Pairwise and full combinations. 
        Partial combinations suffer from brightness collapse or degraded local fidelity at later stages, whereas our full objective (CA+DM+CDM) effectively preserves both global semantic coherence and local details.
    }
    \label{fig:ablation}
\end{figure}

\begin{table}[t]
\centering
\caption{
    \textbf{Ablation study on SD3-Medium at 4 NFE.} 
    \textbf{Left:} Effect of individual loss components. 
    \textbf{Right:} Analysis of core mechanism designs (time schedule, perturbation strategy, and target latent construction). The full CDM design achieves the best performance balance.
}
\label{tab:ablation}
\vspace{2mm}

\begin{minipage}[t]{0.45\textwidth}
\centering
\resizebox{\linewidth}{!}{
\begin{tabular}{l|ccccc}
\toprule
\textbf{Configuration} & \textbf{AES$\uparrow$} & \textbf{DPG$\uparrow$} & \textbf{PICK$\uparrow$} & \textbf{HPSv3$\uparrow$} & \textbf{CLIP$\uparrow$} \\
\midrule
\multicolumn{6}{l}{\textit{(a) Single Loss Ablation}} \\
\hspace{3mm} w/o $\mathcal{L}_{\mathrm{CA}}$ & 5.861 & 72.87 & 21.05 & 8.128 & 24.78 \\
\hspace{3mm} w/o $\mathcal{L}_{\mathrm{DM}}$ & 6.016 & 84.57 & 21.75 & 8.954 & 27.66 \\
\hspace{3mm} w/o $\mathcal{L}_{\mathrm{CDM}}$ & 6.067 & 85.12 & 21.85 & 9.153 & 27.91 \\
\midrule
\multicolumn{6}{l}{\textit{(b) Dual Loss Ablation}} \\
\hspace{3mm} w/o $\mathcal{L}_{\mathrm{DM}} \& \mathcal{L}_{\mathrm{CDM}}$ & 4.634 & 3.45 & 17.50 & -10.15 & 14.60 \\
\hspace{3mm} w/o $\mathcal{L}_{\mathrm{CA}} \& \mathcal{L}_{\mathrm{CDM}}$ & 5.787 & 70.60 & 20.82 & 7.258 & 25.31 \\
\hspace{3mm} w/o $\mathcal{L}_{\mathrm{CA}} \& \mathcal{L}_{\mathrm{DM}}$ & 5.778 & 72.38 & 20.80 & 7.331 & 24.78 \\
\midrule
\rowcolor[HTML]{EBE4F2}
Full CDM & \textbf{6.075} & \textbf{85.26} & \textbf{21.95} & \textbf{9.561} & \textbf{27.98} \\
\bottomrule
\end{tabular}
}
\end{minipage}\hfill
\begin{minipage}[t]{0.53\textwidth}
\centering
\resizebox{\linewidth}{!}{
\begin{tabular}{l|ccccc}
\toprule
\textbf{Model Variant} & \textbf{AES$\uparrow$} & \textbf{DPG$\uparrow$} & \textbf{PICK$\uparrow$} & \textbf{HPSv3$\uparrow$} & \textbf{CLIP$\uparrow$} \\
\midrule
\multicolumn{6}{l}{\textit{(a) Time Schedule}} \\
\hspace{3mm} w/ Fixed Schedule & 6.051 & 83.84 & 21.89 & 9.482 & 27.75 \\
\midrule
\multicolumn{6}{l}{\textit{(b) Off-trajectory Perturbation}} \\
\hspace{3mm} w/o Perturbation (on-traj) & 6.027 & 84.43 & 21.94 & 9.374 & 27.90 \\
\hspace{3mm} w/ Gaussian Perturb. & 6.040 & 84.65 & 21.92 & 9.516 & 27.88 \\
\midrule
\multicolumn{6}{l}{\textit{(c) Target Latent Construction}} \\
\hspace{3mm} w/ Full-trajectory $\hat{\mathbf{x}}_0$ target & 6.026 & 85.14 & 21.92 & 9.346 & 27.97 \\
\midrule
\rowcolor[HTML]{EBE4F2}
Full CDM & \textbf{6.075} & \textbf{85.26} & \textbf{21.95} & \textbf{9.561} & \textbf{27.98} \\
\bottomrule
\end{tabular}
}
\end{minipage}

\end{table}

\paragraph{Loss Components}
We ablate each loss component in \Cref{tab:ablation} and \Cref{fig:ablation}.
Relying solely on $\mathcal{L}_{\mathrm{CA}}$ leads to structural collapse, whereas using only $\mathcal{L}_{\mathrm{DM}}$ or $\mathcal{L}_{\mathrm{CDM}}$ recovers visual quality but struggles with prompt adherence (\emph{e.g.}, noticeably lower CLIP scores). 
Pairing $\mathcal{L}_{\mathrm{CA}}$ with either distribution-matching loss bridges this gap, significantly improving both alignment and aesthetics. 
Ultimately, our full objective ($\mathcal{L}_{\mathrm{CA}}+\mathcal{L}_{\mathrm{DM}}+\mathcal{L}_{\mathrm{CDM}}$) achieves the best performance across all metrics, with HPSv3 peaking at 9.561. 
This confirms their complementary roles: $\mathcal{L}_{\mathrm{CA}}$ anchors structure and semantic alignment, while $\mathcal{L}_{\mathrm{DM}}$ and $\mathcal{L}_{\mathrm{CDM}}$ provide essential on- and off-trajectory distributional supervision, respectively.

\paragraph{Core Mechanism Design}
To validate the critical design choices in Continuous-Time Distribution Matching, we perform an in-depth ablation on its three core mathematical components in the right panel of \Cref{tab:ablation}: the dynamic time schedule, the off-trajectory perturbation strategy, and the target latent construction.
First, reverting our \emph{dynamic schedule} to a standard fixed schedule leads to a considerable drop in generation fidelity, confirming that unifying the simulated trajectory and continuous distribution matching limits structural discrepancy.
Second, replacing our \emph{velocity-driven extrapolation} with a Gaussian noise baseline---implemented by first predicting the clean data $\mathbf{x}_0$ from $\mathbf{x}_t$ and re-adding noise to yield $\mathbf{x}'_t$---or removing perturbations entirely, degrades overall performance.
This indicates that conventional re-noising fails to capture meaningful off-trajectory states, whereas our velocity-based extrapolation accurately simulates the exact truncation drift encountered during large-step inference.
Finally, utilizing the full-trajectory \emph{final generation} $\hat{\mathbf{x}}_0$ rather than the intermediate target $\hat{\mathbf{x}}_0^{(i')}$ as the teacher's input yields suboptimal results. This validates that anchoring the supervision to localized predictions provides a more direct and effective signal for error correction than relying on the extended backward simulation.

\section{Conclusion}
\label{sect:conclusion}
In this paper, we present Continuous-Time Distribution Matching (CDM), a novel distillation framework for high-quality few-step diffusion generation.
Existing discrete-time methods optimize on fixed timesteps, leading to accumulated discretization errors and detail degradation during few-step inference.
To bridge this gap, CDM shifts the optimization into a continuous-time space, leveraging dynamic scheduling and an off-trajectory alignment objective ($\mathcal{L}_{\mathrm{CDM}}$) to explicitly simulate and correct truncation drifts back to the target data manifold.
Extensive experiments on SD3-Medium and Longcat-Image demonstrate that CDM effectively recovers sharp textures and semantic adherence, achieving state-of-the-art 4-step generation. 
Notably, it accomplishes this purely through robust continuous supervision, bypassing the need for adversarial training or costly external reward models.
We hope this work paves the way for more accessible diffusion distillation and inspires future extensions to complex visual synthesis.

\bibliographystyle{plain}
\bibliography{references}

\newpage
\appendix
\crefalias{section}{appendix}

\section{Limitations}
\label{app:limitations}

While CDM achieves strong few-step generation quality, it has several limitations that we leave for future work.
First, although our dynamic continuous schedule and CDM loss do not introduce any additional cost at inference time, they do increase per-iteration training cost: the dynamic schedule samples a variable simulation length $N \sim \mathcal{U}\{1, N_{\max}\}$ that prolongs the average backward simulation, and the CDM loss requires an extra forward pass of the real and fake teachers on the extrapolated off-trajectory latent $\mathbf{x}_{t_i'}$ (see \Cref{app:appendix_efficiency} for detailed results).
Second, as a distillation framework, CDM is fundamentally upper-bounded by the teacher: the DM and CDM losses both rely on the teacher's score as the supervision signal, so concepts or compositions that the teacher itself handles poorly are unlikely to be recovered through distillation alone, as suggested by the CFG-free analysis in \Cref{fig:cfg_vs_nocfg}.
Third, our empirical study is restricted to text-to-image generation such as SD3-Medium and Longcat-Image; we will explore extending CDM to text-and-image-to-image (TI2I) editing and to video diffusion models, where trajectory length and temporal consistency play a larger role, in future studies.

\section{Broader Impact}
\label{app:broader_impact}

CDM offers a more efficient distillation recipe for diffusion models, reducing the inference cost of high-quality text-to-image generation by an order of magnitude and thereby improving the accessibility of these models on commodity hardware.
Since our work only distills a pre-trained teacher and does not introduce new generative capabilities or training data, its potential risks---such as the misuse of generated imagery for misinformation or copyright infringement---are largely inherited from the underlying teacher model rather than amplified by our method.
We encourage practitioners deploying CDM-distilled models to combine them with established safeguards such as NSFW filtering, invisible watermarking, and content provenance standards (e.g., C2PA).

\section{Experiment Details}
\label{app:appendix_exp_details}

\subsection{Training Hyperparameters}

\subsubsection{SD3-Medium}

We apply our framework to distill the pre-trained SD3-Medium~\cite{esser2024scaling} into a 4-step student model.
The optimization relies on a unified objective comprising CFG Augmentation (CA), Distribution Matching (DM), and Continuous-Time Distribution Matching (CDM) regularization, combined with equal weights.
For the CA loss, we adopt the same teacher timestep sample strategy as in D-DMD~\cite{liu2025decoupled}.
The training dataset consists of 200K prompts randomly sampled from the training sets of T2I-2M~\cite{zou2024text2image}, LAION~\cite{schuhmann2022laion}, ShareGPT-4o-Image~\cite{chen2025sharegpt4oimg}, PickScore~\cite{kirstain2023pick}, and OCR~\cite{liu2025flow}.
We perform full fine-tuning on the student network using the AdamW optimizer with a batch size of 128.
The learning rate is set to $1 \times 10^{-5}$ for the student generator and $5 \times 10^{-6}$ for the fake teacher.
The weight decay is set to 0.001, and $\beta$ values are $(0.9, 0.999)$.
Following the Two Time-scale Update Rule (TTUR), the fake teacher is updated 2 times per student generator update.
The CFG guidance scale ($\alpha$) is maintained at 7.0, and the dynamic schedule length ($N_{\max}$) is set to 28.
The entire distillation process runs for 4K iterations on 16 A100 GPUs, which takes approximately 24 hours.

\subsubsection{LongCat}

The distillation of our LongCat~\cite{team2025longcat} 4-step generator shares the same basic training settings as SD3-Medium.
To handle its large model size efficiently, we employ LoRA~\cite{hu2022lora} fine-tuning with rank 64 and alpha 128.
The training utilizes a batch size of 64 and requires 2K iterations on 16 A100 GPUs, taking approximately 24 hours to achieve optimal convergence.

\subsection{Baseline Implementation Details}
\label{sec:appendix_baseline_details}

All baselines use SD3-Medium~\cite{esser2024scaling} as the teacher backbone, with the number of function evaluations (NFE) and inference-time hyperparameters kept consistent with each method's official configuration.
For TDM~\cite{luo2025learning} (\url{https://github.com/Luo-Yihong/TDM}), Hyper-SD~\cite{ren2024hyper} (\url{https://huggingface.co/ByteDance/Hyper-SD}), and Flash-Diffusion~\cite{chadebec2025flash} (\url{https://github.com/gojasper/flash-diffusion}), we directly use their official checkpoints and recommended generation settings.
For DMD2~\cite{yin2024improved} and D-DMD~\cite{liu2025decoupled}, since no official SD3-Medium checkpoints are publicly available, we re-implement them within our unified framework under the exact same setting as our CDM and notably without the GAN-based adversarial loss.

\section{\texorpdfstring{A Score-Matching Perspective on $\mathcal{L}_{\mathrm{CA}}$ and $\mathcal{L}_{\mathrm{DM}}$}{A Score-Matching Perspective on L\_CA and L\_DM}}
\label{app:score_matching}

Tweedie's formula~\cite{efron2011tweedie} provides the bridge between a denoiser's posterior-mean prediction and the underlying score function.
We first derive its form under the flow-matching interpolation used throughout this paper (\Cref{app:tweedie_subsec}), and then apply it to interpret the gradients of $\mathcal{L}_{\mathrm{CA}}$ and $\mathcal{L}_{\mathrm{DM}}$ in \Cref{eq:cfg_loss,eq:ddm_loss} (\Cref{app:ca_grad_subsec,app:dm_grad_subsec}).
The two derivations together justify the claim in \Cref{sec:pilot_study} that, under the dynamic continuous schedule, both losses regularize the student's velocity field uniformly over $(0, 1]$.

Throughout this section, $p_{\mathrm{real}}(\mathbf{z}_\tau | \mathbf{c})$ denotes the marginal distribution of the real data (modeled by the frozen teacher $\mathcal{D}_\phi$) at continuous noise level $\tau$, and $p_{\mathrm{fake}}(\mathbf{z}_\tau | \mathbf{c})$ denotes the corresponding distribution implicitly defined by the online-updated fake teacher $\mathcal{D}_\psi$.

\subsection{Tweedie's Formula under the Flow-Matching Interpolation}
\label{app:tweedie_subsec}

For completeness, we derive Tweedie's formula under the flow-matching interpolation.

\begin{proposition}[Tweedie's Formula for Flow Matching]
\label{prop:tweedie}
\label{app:tweedie}
Consider the flow-matching forward process that interpolates between clean data $\mathbf{x}_0 \sim p_{\mathrm{data}}$ and Gaussian noise $\boldsymbol{\epsilon} \sim \mathcal{N}(\mathbf{0}, \mathbf{I})$:
\begin{equation}
    \label{eq:fm_forward}
    \mathbf{z}_\tau = (1 - \tau)\mathbf{x}_0 + \tau \boldsymbol{\epsilon}, \quad \tau \in [0, 1].
\end{equation}
Then the posterior mean of the clean data given the noisy observation satisfies
\begin{equation}
    \label{eq:tweedie_fm}
    \mathbb{E}[\mathbf{x}_0 | \mathbf{z}_\tau]
    = \frac{\mathbf{z}_\tau + \tau^2 \nabla_{\mathbf{z}_\tau} \log p(\mathbf{z}_\tau)}{1 - \tau}.
\end{equation}
\end{proposition}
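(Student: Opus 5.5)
The plan is to compute the score of the marginal $p(\mathbf{z}_\tau)$ directly by differentiating under the integral sign, and then solve for the posterior mean. First, fix $\tau \in [0,1)$; the endpoint $\tau = 1$ must be excluded, since the right-hand side of \Cref{eq:tweedie_fm} is undefined there. Conditioned on $\mathbf{x}_0$, the interpolation \Cref{eq:fm_forward} gives a Gaussian likelihood
\begin{equation*}
p(\mathbf{z}_\tau \mid \mathbf{x}_0) = \mathcal{N}\bigl(\mathbf{z}_\tau;\, (1-\tau)\mathbf{x}_0,\, \tau^2 \mathbf{I}\bigr),
\end{equation*}
whose gradient in $\mathbf{z}_\tau$ is $-\tfrac{\mathbf{z}_\tau - (1-\tau)\mathbf{x}_0}{\tau^2}\, p(\mathbf{z}_\tau \mid \mathbf{x}_0)$. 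The marginal is
\begin{equation*}
p(\mathbf{z}_\tau) = \int p(\mathbf{z}_\tau \mid \mathbf{x}_0)\, p_{\mathrm{data}}(\mathbf{x}_0)\, d\mathbf{x}_0 .
\end{equation*}

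Next, I differentiate $p(\mathbf{z}_\tau)$ under the integral to get
\begin{equation*}
\nabla_{\mathbf{z}_\tau} p(\mathbf{z}_\tau) = -\frac{1}{\tau^2}\int \bigl(\mathbf{z}_\tau - (1-\tau)\mathbf{x}_0\bigr)\, p(\mathbf{z}_\tau\mid\mathbf{x}_0)\,p_{\mathrm{data}}(\mathbf{x}_0)\,d\mathbf{x}_0 .
\end{equation*}
Dividing by $p(\mathbf{z}_\tau)$ and applying Bayes' rule, $p(\mathbf{x}_0\mid\mathbf{z}_\tau) = p(\mathbf{z}_\tau\mid\mathbf{x}_0)\,p_{\mathrm{data}}(\mathbf{x}_0)/p(\mathbf{z}_\tau)$, turns the integral into a posterior expectation:
\begin{equation*}
\nabla_{\mathbf{z}_\tau}\log p(\mathbf{z}_\tau) = -\frac{\mathbf{z}_\tau - (1-\tau)\,\mathbb{E}[\mathbf{x}_0\mid\mathbf{z}_\tau]}{\tau^2}.
\end{equation*}
Multiplying by $\tau^2$ and rearranging gives $(1-\tau)\,\mathbb{E}[\mathbf{x}_0\mid\mathbf{z}_\tau] = \mathbf{z}_\tau + \tau^2\nabla\log p(\mathbf{z}_\tau)$. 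Dividing by $1-\tau$ then yields \Cref{eq:tweedie_fm}.

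The main obstacle is justifying the exchange of gradient and integral, together with the positivity of $p(\mathbf{z}_\tau)$. The positivity is automatic for $\tau > 0$, because the Gaussian kernel is everywhere positive. For the exchange, the gradient of the kernel is bounded by $C\,(1+\|\mathbf{x}_0\|)$ uniformly for $\mathbf{z}_\tau$ in compact sets. It is therefore enough to assume that $p_{\mathrm{data}}$ has a finite first moment and then invoke dominated convergence; I would state this mild assumption explicitly. Finally, at $\tau = 0$ the formula holds trivially in the degenerate sense $\mathbb{E}[\mathbf{x}_0\mid\mathbf{z}_0]=\mathbf{z}_0$, so the substantive claim concerns $\tau\in(0,1)$.
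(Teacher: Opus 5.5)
Your proof is correct and follows essentially the same route as the paper's proof: the Gaussian conditional, differentiating the marginal under the integral via the Gaussian-derivative identity, converting to a posterior mean by Bayes' rule, and rearranging. Your added rigor (excluding $\tau=1$ and justifying the interchange) goes beyond the paper, and the first-moment assumption is in fact unnecessary: for fixed $\tau\in(0,1)$ the kernel gradient is bounded uniformly in both $\mathbf{x}_0$ and $\mathbf{z}_\tau$, since $r\,e^{-r^2/(2\tau^2)}\le \tau e^{-1/2}$.
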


A denoiser $\mathcal{D}(\mathbf{z}_\tau, \tau)$ trained under the mean-squared-error objective converges to the posterior mean $\mathbb{E}[\mathbf{x}_0 | \mathbf{z}_\tau]$, so \Cref{eq:tweedie_fm} also yields the score-prediction identity used in the main text.

\begin{proof}
Given a clean sample $\mathbf{x}_0$, the conditional distribution of $\mathbf{z}_\tau$ is Gaussian:
\begin{equation}
    p(\mathbf{z}_\tau | \mathbf{x}_0)
    = \mathcal{N}\bigl(\mathbf{z}_\tau;\, (1-\tau)\mathbf{x}_0,\, \tau^2 \mathbf{I}\bigr).
\end{equation}

The marginal distribution of $\mathbf{z}_\tau$ is obtained by integrating over the data distribution:
\begin{equation}
    p(\mathbf{z}_\tau)
    = \int p(\mathbf{x}_0)\,
      \mathcal{N}\bigl(\mathbf{z}_\tau;\, (1-\tau)\mathbf{x}_0,\, \tau^2 \mathbf{I}\bigr)\,
      d\mathbf{x}_0.
\end{equation}

Differentiating $\log p(\mathbf{z}_\tau)$ with respect to $\mathbf{z}_\tau$ gives
\begin{align}
    \nabla_{\mathbf{z}_\tau} \log p(\mathbf{z}_\tau)
    &= \frac{\nabla_{\mathbf{z}_\tau} p(\mathbf{z}_\tau)}{p(\mathbf{z}_\tau)} \nonumber \\
    &= \frac{1}{p(\mathbf{z}_\tau)} \int p(\mathbf{x}_0)\,
      \nabla_{\mathbf{z}_\tau} \mathcal{N}\bigl(\mathbf{z}_\tau;\, (1-\tau)\mathbf{x}_0,\, \tau^2 \mathbf{I}\bigr)\,
      d\mathbf{x}_0.
\end{align}
Using the identity for the derivative of a Gaussian density,
\begin{equation*}
    \nabla_{\mathbf{z}_\tau} \mathcal{N}(\mathbf{z}_\tau;\, \boldsymbol{\mu},\, \tau^2\mathbf{I})
    = \frac{\boldsymbol{\mu} - \mathbf{z}_\tau}{\tau^2}\,
      \mathcal{N}(\mathbf{z}_\tau;\, \boldsymbol{\mu},\, \tau^2\mathbf{I}),
\end{equation*}
with $\boldsymbol{\mu} = (1-\tau)\mathbf{x}_0$, we obtain
\begin{align}
    \nabla_{\mathbf{z}_\tau} \log p(\mathbf{z}_\tau)
    &= \frac{1}{p(\mathbf{z}_\tau)}
       \int p(\mathbf{x}_0)\,
       \frac{(1-\tau)\mathbf{x}_0 - \mathbf{z}_\tau}{\tau^2}\,
       \mathcal{N}\bigl(\mathbf{z}_\tau;\, (1-\tau)\mathbf{x}_0,\, \tau^2 \mathbf{I}\bigr)\,
       d\mathbf{x}_0 \nonumber \\
    &= \frac{(1-\tau)\mathbb{E}[\mathbf{x}_0 | \mathbf{z}_\tau] - \mathbf{z}_\tau}{\tau^2},
\end{align}
where the second equality follows from the definition of the posterior mean,
\begin{equation*}
    \mathbb{E}[\mathbf{x}_0 | \mathbf{z}_\tau]
    = \int \mathbf{x}_0\, p(\mathbf{x}_0 | \mathbf{z}_\tau)\, d\mathbf{x}_0.
\end{equation*}

Solving for $\mathbb{E}[\mathbf{x}_0 | \mathbf{z}_\tau]$ yields
\begin{equation}
    (1-\tau)\,\mathbb{E}[\mathbf{x}_0 | \mathbf{z}_\tau]
    = \mathbf{z}_\tau + \tau^2 \nabla_{\mathbf{z}_\tau} \log p(\mathbf{z}_\tau),
\end{equation}
from which \Cref{eq:tweedie_fm} follows.
\end{proof}

\subsection{Score-Matching View of the CA Gradient}
\label{app:ca_grad_subsec}

Treating the stop-gradient target in \Cref{eq:cfg_loss} as a constant, the chain rule gives
\begin{equation}
    \label{eq:app_cfg_grad}
    \nabla_\theta \mathcal{L}_{\mathrm{CA}} = -w_\tau \alpha \left( \frac{\partial \mathcal{D}_\theta(\mathbf{x}_{t_i}, t_i, \mathbf{c})}{\partial \theta} \right)^{\!\top} \!\left( \mathcal{D}_\phi(\mathbf{z}_{\tau}, \tau, \mathbf{c}) - \mathcal{D}_\phi(\mathbf{z}_{\tau}, \tau, \varnothing) \right).
\end{equation}
Applying \Cref{prop:tweedie} to both teacher predictions and using $p_{\mathrm{real}}(\mathbf{z}_\tau | \varnothing) = p_{\mathrm{real}}(\mathbf{z}_\tau)$, the prediction difference reduces to
\begin{equation}
    \label{eq:app_cfg_score}
    \mathcal{D}_\phi(\mathbf{z}_{\tau}, \tau, \mathbf{c}) - \mathcal{D}_\phi(\mathbf{z}_{\tau}, \tau, \varnothing) = \frac{\tau^2}{1-\tau} \left( \nabla_{\mathbf{z}_\tau} \log p_{\mathrm{real}}(\mathbf{z}_\tau|\mathbf{c}) - \nabla_{\mathbf{z}_\tau} \log p_{\mathrm{real}}(\mathbf{z}_\tau) \right).
\end{equation}
Bayes' rule $\log p(\mathbf{c}|\mathbf{z}_\tau) = \log p(\mathbf{z}_\tau|\mathbf{c}) - \log p(\mathbf{z}_\tau) + \log p(\mathbf{c})$, differentiated with respect to $\mathbf{z}_\tau$, eliminates the data-independent prior and gives the implicit-classifier identity
\begin{equation}
    \label{eq:app_cfg_bayes}
    \nabla_{\mathbf{z}_\tau} \log p_{\mathrm{real}}(\mathbf{z}_\tau|\mathbf{c}) - \nabla_{\mathbf{z}_\tau} \log p_{\mathrm{real}}(\mathbf{z}_\tau) = \nabla_{\mathbf{z}_\tau} \log p_{\mathrm{real}}(\mathbf{c}|\mathbf{z}_\tau).
\end{equation}
Substituting back into \Cref{eq:app_cfg_grad},
\begin{equation}
    \label{eq:app_cfg_optim}
    \nabla_\theta \mathcal{L}_{\mathrm{CA}} = -w_\tau \alpha \frac{\tau^2}{1-\tau} \left( \frac{\partial \mathcal{D}_\theta(\mathbf{x}_{t_i}, t_i, \mathbf{c})}{\partial \theta} \right)^{\!\top} \nabla_{\mathbf{z}_\tau} \log p_{\mathrm{real}}(\mathbf{c}|\mathbf{z}_\tau),
\end{equation}
which matches \Cref{eq:dyn_ca_grad} in the main text.
Hence $\mathcal{L}_{\mathrm{CA}}$ steers $\theta$ along the implicit-classifier gradient $\nabla_{\mathbf{z}_\tau} \log p_{\mathrm{real}}(\mathbf{c}|\mathbf{z}_\tau)$, which corresponds to maximizing text--image alignment.

\subsection{Score-Matching View of the DM Gradient}
\label{app:dm_grad_subsec}

The DM loss in \Cref{eq:ddm_loss} aligns the student-induced distribution $p_{\mathrm{fake}}$ with the real-data distribution $p_{\mathrm{real}}$ by minimizing, in expectation, the reverse Kullback--Leibler divergence
\begin{equation}
    \label{eq:app_kl_grad}
    \nabla_\theta D_{\mathrm{KL}}\!\left(p_{\mathrm{fake}}^{\tilde{\tau}} \,\|\, p_{\mathrm{real}}^{\tilde{\tau}}\right)
    = \mathbb{E}_{\mathbf{z}_{\tilde{\tau}}} \!\left[ \left( \nabla_{\mathbf{z}_{\tilde{\tau}}} \log p_{\mathrm{fake}}(\mathbf{z}_{\tilde{\tau}}|\mathbf{c}) - \nabla_{\mathbf{z}_{\tilde{\tau}}} \log p_{\mathrm{real}}(\mathbf{z}_{\tilde{\tau}}|\mathbf{c}) \right) \frac{\partial \mathbf{z}_{\tilde{\tau}}}{\partial \theta} \right],
\end{equation}
at any continuous noise level $\tilde{\tau} \in (0, 1]$.
Following DMD~\cite{yin2024one}, $p_{\mathrm{fake}}$ is tracked online by the fake teacher $\mathcal{D}_\psi$, which provides a tractable estimate of $\nabla_{\mathbf{z}_{\tilde{\tau}}} \log p_{\mathrm{fake}}$ on the fly.

We now show that the gradient of $\mathcal{L}_{\mathrm{DM}}$ realizes \Cref{eq:app_kl_grad} in score form.
Treating the stop-gradient target in \Cref{eq:ddm_loss} as a constant, the chain rule gives
\begin{equation}
    \label{eq:app_dm_grad}
    \nabla_\theta \mathcal{L}_{\mathrm{DM}} = -w_{\tilde{\tau}} \left( \frac{\partial \mathcal{D}_\theta(\mathbf{x}_{t_i}, t_i, \mathbf{c})}{\partial \theta} \right)^{\!\top} \!\left( \mathcal{D}_\phi(\mathbf{z}_{\tilde{\tau}}, \tilde{\tau}, \mathbf{c}) - \mathcal{D}_\psi(\mathbf{z}_{\tilde{\tau}}, \tilde{\tau}, \mathbf{c}) \right).
\end{equation}
Applying \Cref{prop:tweedie} to $\mathcal{D}_\phi$ and $\mathcal{D}_\psi$ converts the prediction difference into a difference of conditional scores:
\begin{equation}
    \label{eq:app_dm_score}
    \mathcal{D}_\phi(\mathbf{z}_{\tilde{\tau}}, \tilde{\tau}, \mathbf{c}) - \mathcal{D}_\psi(\mathbf{z}_{\tilde{\tau}}, \tilde{\tau}, \mathbf{c}) = \frac{\tilde{\tau}^2}{1-\tilde{\tau}} \!\left( \nabla_{\mathbf{z}_{\tilde{\tau}}} \log p_{\mathrm{real}}(\mathbf{z}_{\tilde{\tau}}|\mathbf{c}) - \nabla_{\mathbf{z}_{\tilde{\tau}}} \log p_{\mathrm{fake}}(\mathbf{z}_{\tilde{\tau}}|\mathbf{c}) \right).
\end{equation}
Substituting into \Cref{eq:app_dm_grad},
\begin{equation}
    \label{eq:app_dm_optim}
    \nabla_\theta \mathcal{L}_{\mathrm{DM}} = -w_{\tilde{\tau}}\,\frac{\tilde{\tau}^2}{1-\tilde{\tau}} \!\left( \frac{\partial \mathcal{D}_\theta(\mathbf{x}_{t_i}, t_i, \mathbf{c})}{\partial \theta} \right)^{\!\top} \!\left( \nabla_{\mathbf{z}_{\tilde{\tau}}} \log p_{\mathrm{real}}(\mathbf{z}_{\tilde{\tau}}|\mathbf{c}) - \nabla_{\mathbf{z}_{\tilde{\tau}}} \log p_{\mathrm{fake}}(\mathbf{z}_{\tilde{\tau}}|\mathbf{c}) \right),
\end{equation}
which matches \Cref{eq:dyn_dm_grad} in the main text.
The bracketed score difference shares the structure of the integrand in \Cref{eq:app_kl_grad} and, in expectation, drives $p_{\mathrm{fake}}$ toward $p_{\mathrm{real}}$ at noise level $\tilde{\tau}$.

Together, \Cref{eq:app_cfg_optim,eq:app_dm_optim} share the same prefactor $\tfrac{\tau^2}{1-\tau}\,(\partial \mathcal{D}_\theta / \partial \theta)^{\!\top}$ and act on a score-valued target.
Because the dynamic continuous schedule (\Cref{sec:pilot_study}) draws the student anchor $t_i$ and the teacher perturbation timesteps $\tau, \tilde{\tau}$ independently from the same continuous distribution on $(0, 1]$, the resulting supervision is applied uniformly over the entire time domain rather than only at sparse discrete anchors.

\section{Local and Global Truncation Error of Euler Sampling}
\label{app:euler_error}

This appendix derives the local and global truncation error bounds for explicit Euler sampling to formally show that the error is controlled by $M_2$ (the supremum of the velocity field's material derivative), which is precisely the quantity suppressed by the CDM loss.

\paragraph{Setup and Local Error}
Consider the probability-flow ODE $d\mathbf{x}_\tau / d\tau = v_\theta(\mathbf{x}_\tau, \tau, \mathbf{c})$ for $\tau \in [\epsilon, 1]$. Discretizing the interval with a schedule $1 = t_1 > t_2 > \cdots > t_{N} = \epsilon$ and step sizes $\Delta t = h_j := t_j - t_{j+1} > 0$, a single explicit Euler step from $t_j$ down to $t_{j+1}$ is:
\begin{equation}
    \label{eq:app_euler_step}
    \tilde{\mathbf{x}}_{t_{j+1}} = \mathbf{x}_{t_j} - h_j\, v_\theta(\mathbf{x}_{t_j}, t_j, \mathbf{c}).
\end{equation}
Assuming $v_\theta$ is sufficiently smooth and $L$-Lipschitz in $\mathbf{x}$, we follow the standard local-error convention by comparing a single Euler step against the exact ODE solution passing through the same starting point $\mathbf{x}_{t_j}$ at $\tau = t_j$.
Expanding the true solution around $\tau = t_j$ via Taylor's theorem with Lagrange remainder gives:
\begin{equation}
    \mathbf{x}_{t_{j+1}} = \mathbf{x}_{t_j} - h_j\, \dot{\mathbf{x}}_{t_j} + \tfrac{1}{2} h_j^2\, \ddot{\mathbf{x}}_{\xi_j}, \qquad \xi_j \in (t_{j+1}, t_j).
\end{equation}
Substituting $\dot{\mathbf{x}}_\tau = v_\theta(\mathbf{x}_\tau, \tau, \mathbf{c})$ and subtracting \Cref{eq:app_euler_step} yields the local truncation error:
\begin{equation}
    \label{eq:app_lte}
    \big\| \mathbf{x}_{t_{j+1}} - \tilde{\mathbf{x}}_{t_{j+1}} \big\|
    \;=\; \tfrac{1}{2} h_j^2\, \|\ddot{\mathbf{x}}_{\xi_j}\|
    \;\le\; \tfrac{1}{2} h_j^2 \cdot M_2^{(j)} \;=\; \mathcal{O}\left( (\Delta t)^2 \sup_{\tau \in [t_{j+1}, t_j]} \Big\| \frac{d v_\theta}{d\tau} \Big\| \right),
\end{equation}
where $M_2^{(j)}$ bounds the material derivative $d v_\theta / d\tau$ (the total variation of $v_\theta$ along the trajectory) over the step:
\begin{equation}
    \label{eq:app_m2}
    M_2^{(j)} \;:=\; \sup_{\tau \in [t_{j+1}, t_j]} \Big\| \frac{d v_\theta}{d\tau} \Big\| \;=\; \sup_{\tau \in [t_{j+1}, t_j]} \Big\| \partial_\tau v_\theta(\mathbf{x}_\tau, \tau, \mathbf{c}) \;+\; J_{\mathbf{x}} v_\theta(\mathbf{x}_\tau, \tau, \mathbf{c})\, v_\theta(\mathbf{x}_\tau, \tau, \mathbf{c}) \Big\|.
\end{equation}

\paragraph{Global Error}
Accumulating this local error over $N$ steps down to $\tau = \epsilon$ yields the global error bound:
\begin{equation}
    \label{eq:app_global}
    \big\| \mathbf{x}_\epsilon - \hat{\mathbf{x}}_\epsilon \big\|
    \;\le\; \frac{e^{L(1-\epsilon)} - 1}{L} \cdot \tfrac{1}{2}\, \bar{h}\, M_2,
    \qquad \bar{h} := \max_j h_j, \quad M_2 := \max_j M_2^{(j)}.
\end{equation}
This bound shows that the global error is $\mathcal{O}(\bar{h})$ (since accumulating $\mathcal{O}(1/\bar{h})$ steps of $\mathcal{O}(\bar{h}^2)$ local error loses one order), and for a fixed step budget $N$, $M_2$ is the only factor that can be optimized by training.

\paragraph{How CDM Constrains $M_2$}
The CDM loss supervises the student at the extrapolated off-trajectory latent:
\begin{equation}
    \mathbf{x}_{t_i'} \;=\; \mathbf{x}_{t_i} + \Delta t\, v_\theta(\mathbf{x}_{t_i}, t_i, \mathbf{c}), \quad \text{where} \;\; \Delta t = t_i' - t_i.
\end{equation}
To understand its regularization effect, consider the first-order Taylor expansion of the velocity field at this extrapolated point around $(\mathbf{x}_{t_i}, t_i)$:
\begin{align}
    v_\theta(\mathbf{x}_{t_i'}, t_i') 
    &\approx v_\theta(\mathbf{x}_{t_i}, t_i) + \partial_\tau v_\theta(\mathbf{x}_{t_i}, t_i, \mathbf{c}) \Delta t + J_{\mathbf{x}} v_\theta(\mathbf{x}_{t_i}, t_i, \mathbf{c}) (\mathbf{x}_{t_i'} - \mathbf{x}_{t_i}) \nonumber \\
    &= v_\theta(\mathbf{x}_{t_i}, t_i) + \Delta t \underbrace{\left( \partial_\tau v_\theta + J_{\mathbf{x}} v_\theta\, v_\theta \right)}_{= \, d v_\theta / d\tau}.
\end{align}
Rearranging the terms reveals that the material derivative is functionally approximated by the finite difference taking the Euler leap:
\begin{equation}
    \frac{d v_\theta}{d\tau} \approx \frac{v_\theta(\mathbf{x}_{t_i'}, t_i') - v_\theta(\mathbf{x}_{t_i}, t_i)}{\Delta t}.
\end{equation}
While standard distillation ensures the student matches the teacher at the anchor $\mathbf{x}_{t_i}$ (i.e., $v_\theta(\mathbf{x}_{t_i}, t_i) \approx v_\phi(\mathbf{x}_{t_i}, t_i)$), the CDM loss additionally enforces $v_\theta(\mathbf{x}_{t_i'}, t_i') \approx v_\phi(\mathbf{x}_{t_i'}, t_i')$. Together, they force the student's material derivative to mimic that of the teacher $\frac{d v_\theta}{d\tau} \approx \frac{d v_\phi}{d\tau}$. Since the pre-trained teacher naturally presents a smooth velocity field with bounded variation, CDM effectively transfers this smoothness to the student, implicitly regularizing $M_2$ and preventing the sporadic high-frequency oscillation typically observed in discretely trained models.

\section{Training Algorithm}
\label{sec:appendix_algorithm}

For completeness, we summarize the full training procedure of CDM in \Cref{alg:training}.
The pseudocode reflects the implementation details described in \Cref{sec:method}: the dynamic continuous time schedule (\Cref{sec:pilot_study}), the decoupled CA and DM losses on backward-simulation anchors (\Cref{sec:prelim}), and the off-trajectory CDM loss anchored to the localized prediction $\hat{\mathbf{x}}_0^{(i')}$ (\Cref{sec:cdm}).
All three student-side losses are computed within a single backward simulation per iteration; the intermediate latent $\mathbf{x}_{t_i}$ is extracted from the trajectory at a uniformly sampled anchor $t_i$. As established in D-DMD~\cite{liu2025decoupled}, this single shared latent $\mathbf{x}_{t_i}$ is used across all objectives to improve training efficiency, and the student parameters $\theta$ receive a single combined gradient step.
The fake teacher $\psi$ is updated on a separate optimizer using the standard flow-matching objective on the re-noised one-step student prediction $\hat{\mathbf{x}}_0^{(i)}$, sharing the same anchor $i$ with the distillation losses.

\begin{algorithm}[htbp]
\caption{Training Procedure of CDM}
\label{alg:training}
\begin{algorithmic}[1]
\REQUIRE Student $\theta$, real teacher $\phi$ (frozen), fake teacher $\psi$, max length $N_{\max}$, guidance scale $\alpha$, prompt set $\mathcal{C}$
\REPEAT
    \STATE Sample prompt $\mathbf{c} \sim \mathcal{C}$,\; $N \sim \mathcal{U}\{1, \ldots, N_{\max}\}$,\; schedule $1 = t_1 > t_2 > \cdots > t_N > 0$\\[2pt]

    \STATE \textcolor{gray}{\textit{// 1. Backward simulation (no gradient)}}
    \STATE $\mathbf{x}_{t_1} \sim \mathcal{N}(\mathbf{0}, \mathbf{I})$;\; run $\theta$ for $N$ Euler steps to obtain $\{\mathbf{x}_{t_n}\}_{n=1}^N$
    \STATE Sample anchor $i \sim \mathcal{U}\{1, \ldots, N\}$\\[2pt]

    \STATE \textcolor{gray}{\textit{// 2. Fake teacher update (on the re-noised one-step student prediction $\hat{\mathbf{x}}_0^{(i)}$)}}
    \STATE $\hat{\mathbf{x}}_0^{(i)} \leftarrow \mathcal{D}_\theta(\mathbf{x}_{t_i}, t_i, \mathbf{c})$
    \STATE Sample $\tau_\psi \sim \mathcal{U}(0, 1]$;\; $\mathbf{z}_{\tau_\psi} \leftarrow (1 - \tau_\psi)\,\operatorname{sg}[\hat{\mathbf{x}}_0^{(i)}] + \tau_\psi\,\boldsymbol{\epsilon}_\psi$
    \STATE $\psi \leftarrow \psi - \eta_\psi\, \nabla_\psi \bigl\| v_\psi(\mathbf{z}_{\tau_\psi}, \tau_\psi, \mathbf{c}) - (\boldsymbol{\epsilon}_\psi - \operatorname{sg}[\hat{\mathbf{x}}_0^{(i)}]) \bigr\|_2^2$\\[2pt]

    \STATE \textcolor{gray}{\textit{// 3. CA Loss}}
    \STATE Sample $\tau \sim \mathcal{U}(0, 1]$;\; $\mathbf{z}_\tau \leftarrow (1-\tau)\,\operatorname{sg}[\hat{\mathbf{x}}_0^{(i)}] + \tau\,\boldsymbol{\epsilon}_\tau$
    \STATE Compute $\mathcal{L}_{\mathrm{CA}}$ \hfill (\Cref{eq:cfg_loss})\\[2pt]

    \STATE \textcolor{gray}{\textit{// 4. DM Loss}}
    \STATE Sample $\tilde{\tau} \sim \mathcal{U}(0, 1]$;\; $\mathbf{z}_{\tilde{\tau}} \leftarrow (1-\tilde{\tau})\,\operatorname{sg}[\hat{\mathbf{x}}_0^{(i)}] + \tilde{\tau}\,\boldsymbol{\epsilon}_{\tilde{\tau}}$
    \STATE Compute $\mathcal{L}_{\mathrm{DM}}$ \hfill (\Cref{eq:ddm_loss})\\[2pt]

    \STATE \textcolor{gray}{\textit{// 5. CDM Loss}}
    \STATE Sample $t_i' \sim \mathcal{U}(0, 1]$;\; $\mathbf{x}_{t_i'} \leftarrow \mathbf{x}_{t_i} + (t_i' - t_i)\, v_\theta(\mathbf{x}_{t_i}, t_i, \mathbf{c})$
    \STATE $\hat{\mathbf{x}}_0^{(i')} \leftarrow \mathcal{D}_\theta(\mathbf{x}_{t_i'}, t_i', \mathbf{c})$
    \STATE Sample $\hat{\tau} \sim \mathcal{U}(0, 1]$;\; $\mathbf{z}_{\hat{\tau}} \leftarrow (1 - \hat{\tau})\,\operatorname{sg}[\hat{\mathbf{x}}_0^{(i')}] + \hat{\tau}\,\boldsymbol{\epsilon}_{\hat{\tau}}$
    \STATE Compute $\mathcal{L}_{\mathrm{CDM}}$ \hfill (\Cref{eq:cdm_base_loss})\\[2pt]

    \STATE \textcolor{gray}{\textit{// 6. Student update}}
    \STATE $\theta \leftarrow \theta - \eta\, \nabla_\theta ( \mathcal{L}_{\mathrm{CA}} + \mathcal{L}_{\mathrm{DM}} + \mathcal{L}_{\mathrm{CDM}} )$ \hfill (\Cref{eq:full_objective})
\UNTIL{converged}
\end{algorithmic}
\end{algorithm}

\section{More Quantitative Comparisons}
\label{app:appendix_efficiency}

To further evaluate model performance and training/inference resource consumption, we select the strongest baseline from \Cref{tab:metrics_comparison}, namely D-DMD~\cite{liu2025decoupled}, and conduct a more in-depth comparison against our CDM along two complementary dimensions: (i) additional quality metrics that are not covered by the main table (OCR accuracy for text rendering and FID for distributional fidelity), and (ii) training and inference efficiency.

\paragraph{Evaluation Protocol}
For text rendering evaluation, we calculate OCR accuracy using PaddleOCR~\cite{cui2025paddleocr} on a test set of 1K OCR prompts from FlowGRPO~\cite{liu2025flow}.
Additionally, we compute the Fr\'echet Inception Distance (FID)~\cite{heusel2017gans} using 10K prompts from the COCO 2014 validation set~\cite{lin2014microsoft}.
Relative training time is normalized to D-DMD, and inference latency is measured at $1024 \times 1024$ resolution with 4 NFE on a single GPU under identical hardware and software conditions across all methods.

\paragraph{Results}
As reported in \Cref{tab:training_efficiency}, CDM delivers the strongest overall quality among the three configurations.
Across the seven quality metrics, CDM ranks first on six of them (Aesthetic, DPGBench, PickScore, HPSv3, CLIPScore, and FID), while remaining closely competitive behind the fixed-schedule variant on OCR accuracy.
The fixed-schedule variant of CDM consistently ranks second across most quality metrics, indicating that our overall framework is already a strong recipe and that the dynamic time schedule provides a further, consistent improvement on top of it.
On the efficiency side, our dynamic continuous-time schedule and CDM loss introduce additional per-iteration training overhead---primarily due to the longer average backward simulation length and the extra forward pass on the extrapolated off-trajectory latent $\mathbf{x}_{t_i'}$---resulting in a relative training time of roughly $1.8\times$ that of D-DMD with comparable peak memory ($62.5$ vs.\ $62.2$~GB).
Crucially, this overhead is confined to the training phase: at inference time, all three configurations share identical computational cost since they use the same backbone and the same number of function evaluations, so the per-image latency of CDM is on par with D-DMD ($246$~ms/img).

\begin{table}[ht]
\centering
\caption{\textbf{Extended comparison with the strongest baseline D-DMD on SD3-Medium with 4 NFE.} We report the five main quality metrics (Aesthetic, DPGBench, PickScore, HPSv3, CLIPScore) together with two complementary quality metrics (OCR accuracy on 1K FlowGRPO prompts and FID on 10K COCO 2014 val prompts), as well as training memory, relative training time (normalized to D-DMD), and inference latency. The best and \underline{second-best} results in each column are highlighted in \textbf{bold} and \underline{underline}, respectively.}
\label{tab:training_efficiency}
\setlength{\tabcolsep}{3pt}
\resizebox{\textwidth}{!}{
\begin{tabular}{l ccccc cc ccc}
\toprule
 & \multicolumn{5}{c}{\textbf{Quality (Main Metrics)}}
 & \multicolumn{2}{c}{\textbf{Quality (Extra)}}
 & \multicolumn{3}{c}{\textbf{Efficiency}} \\
\cmidrule(lr){2-6}\cmidrule(lr){7-8}\cmidrule(lr){9-11}
\textbf{Method}
 & \textbf{Aes}$\uparrow$ & \textbf{DPG}$\uparrow$ & \textbf{Pick}$\uparrow$
 & \textbf{HPSv3}$\uparrow$ & \textbf{CLIP}$\uparrow$
 & \textbf{OCR}$\uparrow$ & \textbf{FID}$\downarrow$
 & \textbf{Mem (GB)} & \textbf{TrainTime} & \textbf{Latency (ms)} \\
\midrule
D-DMD$^{*}$~\cite{liu2025decoupled}  & 6.038 & \underline{84.52} & 21.85 & 9.176 & 27.69 & 33.47 & 31.47 & 62.2 & 1.0$\times$ & 246 \\
CDM (w/ Fixed Schedule)              & \underline{6.051} & 83.84 & \underline{21.89} & \underline{9.482} & \underline{27.75} & \textbf{37.33} & \underline{31.05} & 62.5 & 1.1$\times$ & 246 \\
\rowcolor[HTML]{EBE4F2}
CDM                                  & \textbf{6.075} & \textbf{85.26} & \textbf{21.95} & \textbf{9.561} & \textbf{27.98} & \underline{34.82} & \textbf{30.30} & 62.5 & 1.8$\times$ & 246 \\
\bottomrule
\end{tabular}
}
\end{table}

\section{Quantitative Evaluation of the DM Loss}
\label{app:dm_loss_analysis}

As discussed in \Cref{sec:intro} and visually demonstrated in \Cref{fig:cfg_vs_nocfg}, relying solely on the Distribution Matching (DM) objective limits the student model to learning a marginal, unguided distribution.
To quantitatively validate this, we evaluate the performance of the student models distilled exclusively with the DM loss and compare them against their respective teacher models running with and without Classifier-Free Guidance (CFG).

The results are summarized in \Cref{tab:dm_loss_analysis}.
When disabling CFG for the teacher (Teacher CFG-free), we observe a significant deterioration in both semantic alignment metrics (e.g., DPGBench, HPSv3) and visual fidelity.
Crucially, the student model distilled with the DM loss alone almost completely mirrors this performance drop, attaining metric scores that closely track the CFG-free teacher.
As shown in \Cref{tab:dm_loss_analysis}, this highly consistent degradation pattern on both SD3-Medium and Longcat-Image confirms that the DM loss accurately aligns the student with the underlying teacher---but strictly with its weak, unguided marginal distribution.

\begin{table}[h]
\centering
\caption{\textbf{Quantitative validation of the DM loss's alignment with CFG-free distributions.} The student model distilled exclusively with the DM loss heavily mirrors the performance deterioration of the CFG-free teacher across all metrics, confirming our visual observations in \Cref{fig:cfg_vs_nocfg}.}
\label{tab:dm_loss_analysis}
\vspace{2mm}
\resizebox{\linewidth}{!}{
\begin{tabular}{lc ccccc}
\toprule
\textbf{Method} & \textbf{NFE} & \textbf{Aesthetic}$\uparrow$ & \textbf{DPGBench}$\uparrow$ & \textbf{PickScore}$\uparrow$ & \textbf{HPSv3}$\uparrow$ & \textbf{CLIPScore}$\uparrow$ \\
\midrule
\multicolumn{7}{l}{\textit{\color{gray}SD3-Medium}} \\
\rowcolor[HTML]{E6E6E6} 
Teacher (with CFG) & 100 & 5.885 & 85.04 & 21.73 & 8.189 & 28.60 \\
Teacher (CFG-free)              & 50  & 5.681 & 69.65 & 20.24 & 4.693 & 24.32 \\
Student (DM loss only)          & 4   & 5.787 & 70.60 & 20.82 & 7.258 & 25.31 \\
\midrule
\multicolumn{7}{l}{\textit{\color{gray}Longcat-Image}} \\
\rowcolor[HTML]{E6E6E6} 
Teacher (with CFG)              & 100 & 5.926 & 87.08 & 21.65 & 9.450 & 26.78 \\
Teacher (CFG-free)              & 50  & 5.952 & 84.72 & 21.12 & 8.253 & 25.63 \\
Student (DM loss only)          & 4   & 6.028 & 80.53 & 20.10 & 8.186 & 21.69 \\
\bottomrule
\end{tabular}
}
\end{table}

\section{CDM under Varying Inference Steps}
\label{sec:appendix_nfe_robustness}

Although our student is distilled with a target of 4 NFE, the continuous-time training paradigm of CDM does not bind the resulting model to any specific inference schedule.
On the one hand, the dynamic continuous schedule (\Cref{sec:pilot_study}) randomizes the backward simulation length $N \sim \mathcal{U}\{1, N_{\max}\}$ at every training iteration, so the student is exposed to trajectories of varying lengths rather than a single fixed grid.
On the other hand, the $\mathcal{L}_{\mathrm{CDM}}$ loss (\Cref{sec:cdm}) further regularizes the student's velocity field $v_\theta$ across the continuous time domain, which directly suppresses the per-step truncation error of order $\mathcal{O}((\Delta t)^2 \sup_\tau \|dv_\theta/d\tau\|)$ that dominates few-step Euler integration.
As a result, CDM remains usable across a range of NFEs at test time without any retraining or schedule-specific tuning.
\Cref{fig:step_robustness} shows samples generated by the same CDM checkpoint under NFE $\in \{3, 4, 6, 8\}$ with identical prompts and seeds, where the model produces coherent and well-aligned images throughout the range and progressively recovers finer details as the NFE increases.

\begin{figure}[t]
    \centering
    \includegraphics[width=\textwidth]{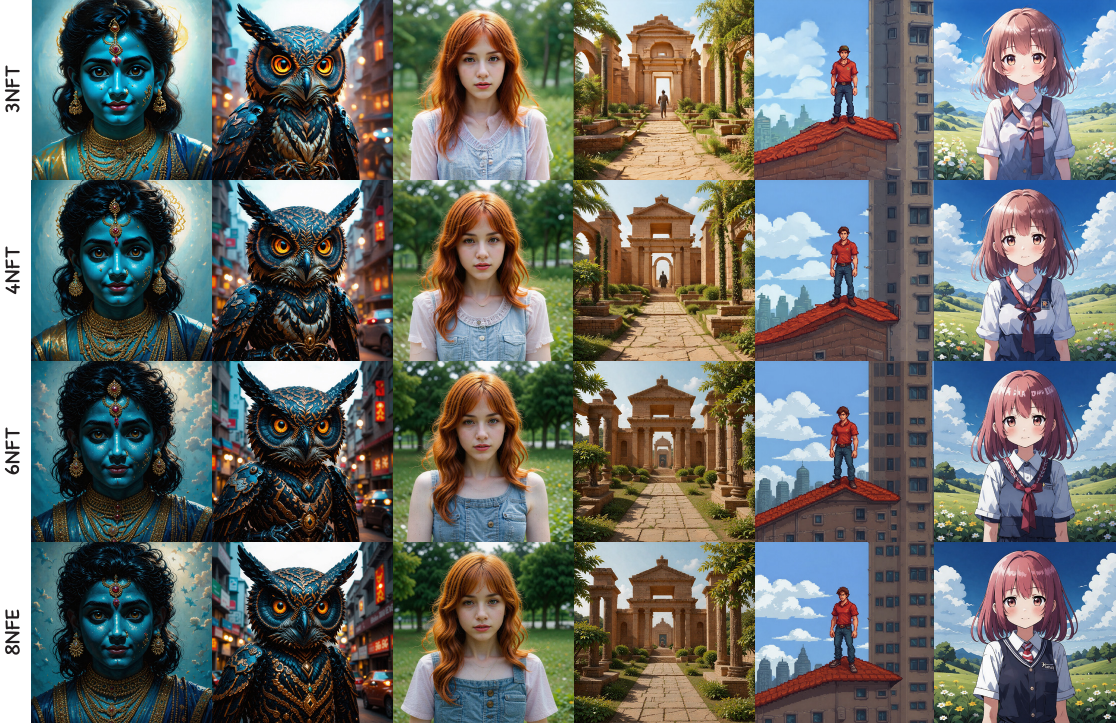}
    \caption{Generations from the same CDM checkpoint under varying NFE $\in \{3, 4, 6, 8\}$, using identical prompts and random seeds across columns. CDM produces coherent and prompt-aligned images across the full range, with finer details emerging as more inference steps are used.}
    \label{fig:step_robustness}
\end{figure}

\section{More Qualitative Results}
\label{sec:appendix_more_qualitative}

To complement the main qualitative comparison in \Cref{fig:main_results}, we provide additional samples generated by CDM on both backbones.
\Cref{fig:appendix_more_sd3} shows results on SD3-Medium, covering a diverse set of prompts that span photorealistic portraits, complex scenes, stylized illustrations, and text-rich compositions.
\Cref{fig:appendix_more_longcat} shows the corresponding results on Longcat-Image, demonstrating that the proposed continuous-time distribution matching framework generalizes consistently across different backbones.
All images are generated with 4 NFE.

\begin{figure}[p]
    \centering
    \includegraphics[width=\textwidth]{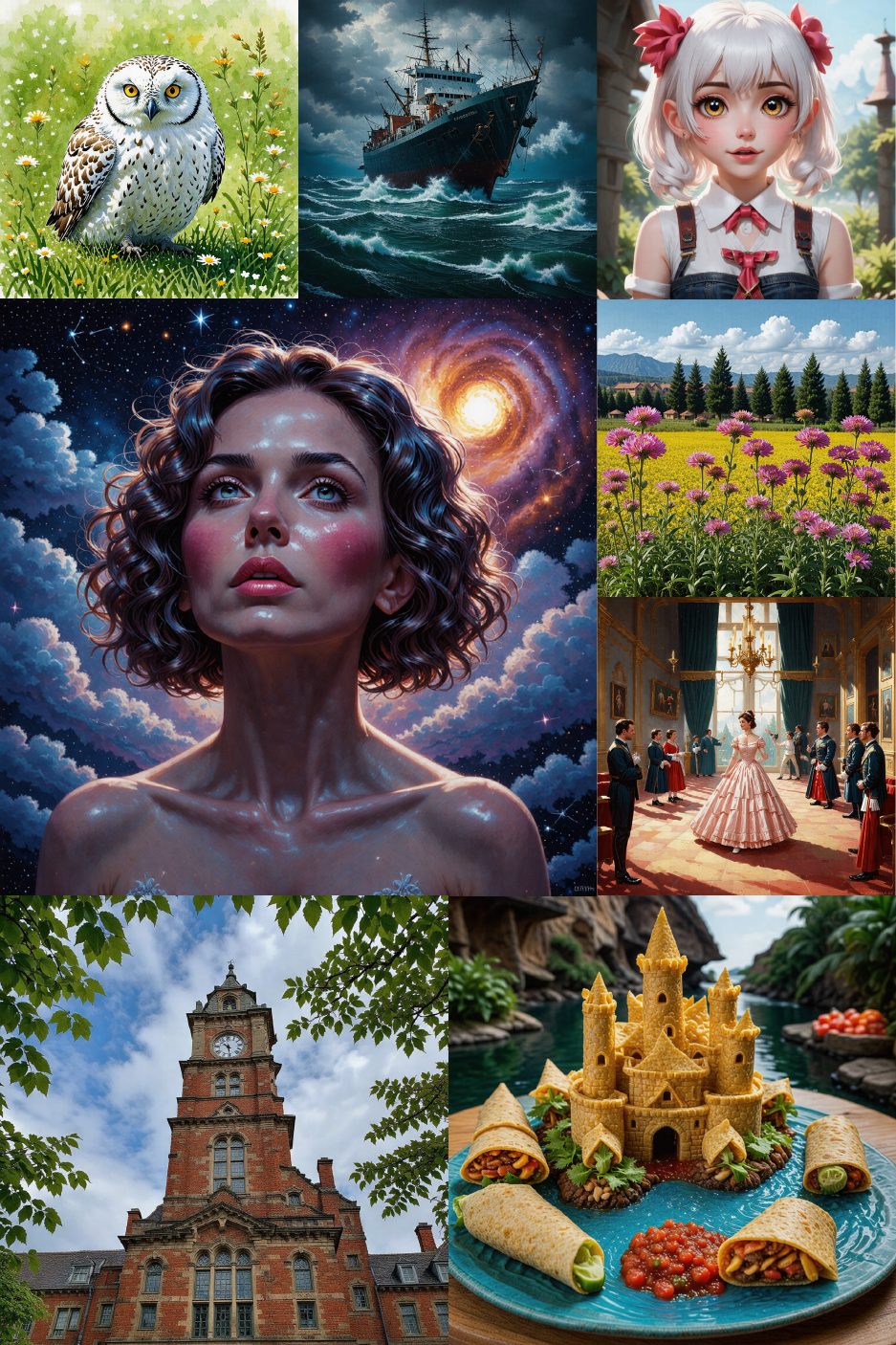}
    \caption{Additional qualitative results of CDM on SD3-Medium at $1024 \times 1024$ resolution with 4 NFE, covering diverse prompt categories. Zoom in for best view.}
    \label{fig:appendix_more_sd3}
\end{figure}

\begin{figure}[p]
    \centering
    \includegraphics[width=\textwidth]{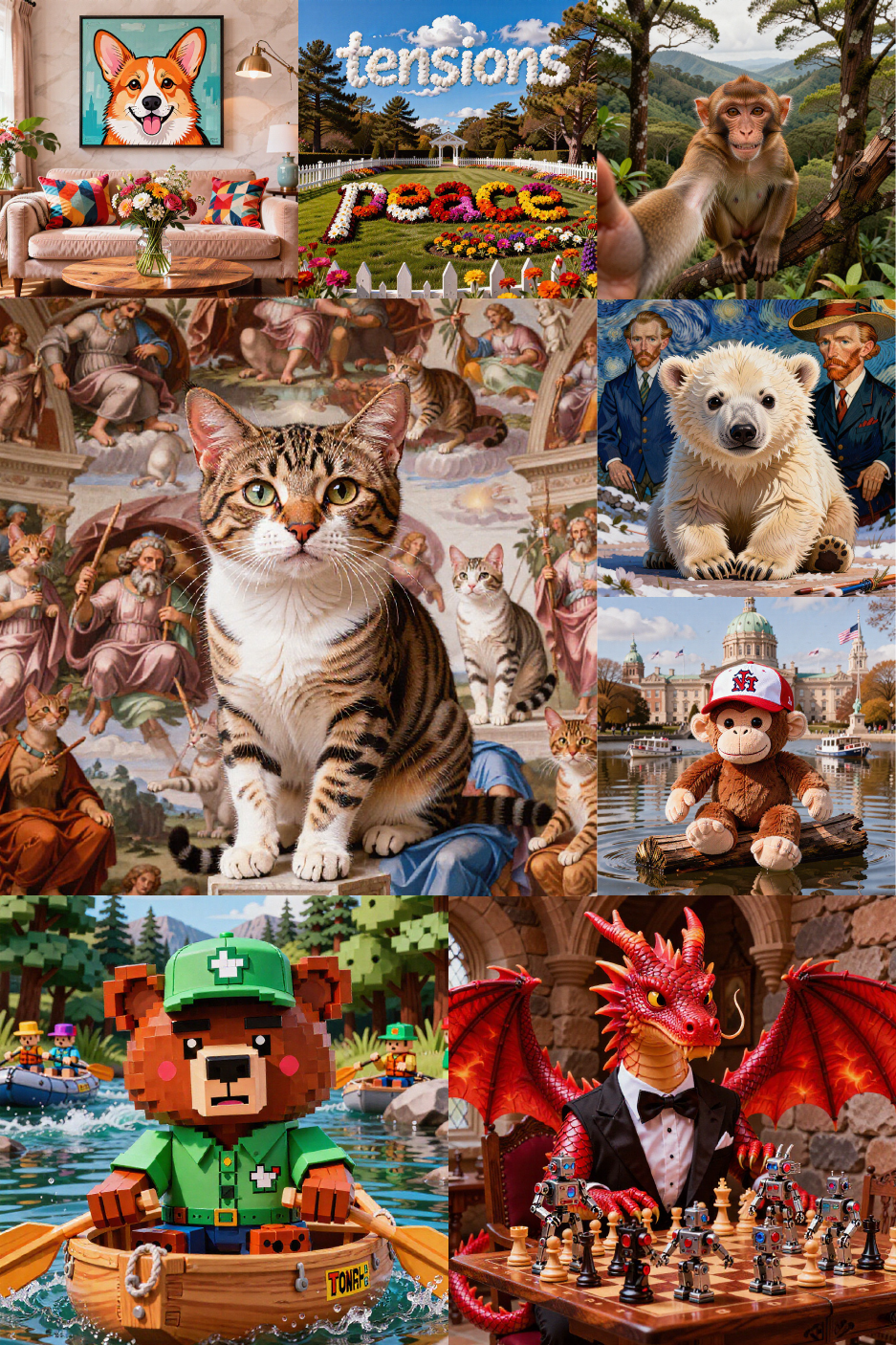}
    \caption{Additional qualitative results of CDM on Longcat-Image at $1024 \times 1024$ resolution with 4 NFE, covering diverse prompt categories. Zoom in for best view.}
    \label{fig:appendix_more_longcat}
\end{figure}

\end{document}